\documentclass[11pt]{article}

\usepackage{underscore}
\usepackage{array}
\newcolumntype{P}[1]{>{\centering\arraybackslash}p{#1}}
\newcolumntype{M}[1]{>{\centering\arraybackslash}m{#1}}
\usepackage[applemac]{inputenc}
\usepackage[numbers,compress]{natbib}
\usepackage{amsthm}
\usepackage{amsmath}
\usepackage{algcompatible}
\usepackage{algorithm}
\usepackage{booktabs}
\usepackage{enumerate}
\usepackage{graphicx}
\usepackage{amssymb}
\usepackage{latexsym}
\usepackage{epstopdf}
\usepackage{color}
\usepackage{xcolor}
\usepackage{bbm}
\usepackage{needspace}
\usepackage{color, colortbl}
\usepackage[english]{babel}
\usepackage{tikz}
\usepackage{caption}
\usepackage{subcaption}
\usetikzlibrary{arrows,automata}
\usetikzlibrary{positioning}
\usepackage{filecontents}
\usepackage{mathtools}
\usepackage{algorithm}
\usepackage{multirow}
\usepackage{algpseudocode}
\usepackage{appendix} 
\usepackage{fullpage}   

\usepackage{arydshln}

\DeclareCaptionFont{mysize}{\fontsize{8.0}{9.6}\selectfont}
\captionsetup{font=mysize}

\usepackage{ulem}

\usepackage{todonotes}

\DeclareCaptionFont{mysize}{\fontsize{8.0}{9.6}\selectfont}
\captionsetup{font=mysize}

\pagenumbering{gobble}

\addtolength{\textwidth}{-6mm}
\addtolength{\hoffset}{3mm}
\addtolength{\textheight}{-0mm}
\addtolength{\voffset}{4mm}
\theoremstyle{plain}
\newtheorem{thm}{\protect\theoremname}
\theoremstyle{plain}

\newtheorem{remark}{Remark}
\newtheorem{coro}{Corollary}

\newtheorem{define}{Definition}

\newtheorem{assume}{Assumption}

\makeatother

\usepackage{babel}
\providecommand{\lemmaname}{Lemma}
\providecommand{\theoremname}{Theorem}

\newcommand{\prox}{\mathrm{prox}}

\begin{document}
\title{\vspace{0.1in} FedADMM: A Federated Primal-Dual Algorithm Allowing Partial Participation}
\author{Han Wang, Siddartha Marella, James Anderson%
\thanks{ This work is supported by awards from the NSF; award ID CAREER-2144634, and DOE; award ID DE-SC0022234.
The authors are with the Department of Electrical Engineering, Columbia University in the City of New York, New York, NY, 10027, USA (e-mail \texttt{\{hw2786, sm4940, james.anderson\}@columbia.edu).}}  }

\maketitle
\normalem 
\begin{abstract}
Federated learning is a framework for distributed optimization that places emphasis on communication efficiency. In particular, it follows a client-server broadcast model and is particularly appealing because of its ability to accommodate  heterogeneity in client compute and storage resources, non-i.i.d. data assumptions, and data privacy.
Our contribution is to offer a new federated learning algorithm, FedADMM, for solving non-convex composite optimization problems with non-smooth regularizers. We prove converges of FedADMM for the case when not all clients are able to participate in a given communication round under a very general sampling model.
\end{abstract}
\section{Introduction}
Federated learning 
(FL)~\cite{konevcny2016federated,mcmahan2017communication}, a novel distributed learning paradigm, has attracted significant attention in the past few years. Federated algorithms take a client/server computation model, and provide scope to  train large-scale machine learning models over an edge-based distributed computing architecture. In the paradigm of FL, models are trained collaboratively under the coordination of a central server while storing data locally on the edge/clients. Typically, clients (devices and entities ranging from mobile phones to hospitals, to an internet of things~\cite{mohri2019agnostic,kairouz2021advances}) are assumed to be heterogeneous; each client is subject to its own constraints on available computational and storage resources. By allowing data to be stored client-side, the FL paradigm has many favorable privacy properties.

In contrast to ``traditional'' distributed optimization, FL framework has its own unique challenges and characteristics. First, \textit{communication} becomes problematic when the number of edge devices/clients is large, or
the connection between the central server and a device is slow, e.g., when the mobile phones have limited bandwidth. Second, datasets stored in each client may be highly \textit{heterogeneous} in that they are sampled from different population distributions, or the amount of data belonging to each client is unbalanced. Third,  \textit{device/client heterogeneity} can severely hinder algorithm performance; differences in hardware, software, and power (connectivity) lead to varying computation speeds among clients, leading to global performance being dominated by the slowest agent. This is known as the ``straggler'' effect. Additionally, the server  may lose control over the clients when they power down or lose connectivity. It is thus common for only a fraction of clients to participate in in each round of the training (optimization) process, and federated optimization algorithms must accommodate this \textit{partial participation}. 

A wealth of algorithms have been developed to address the aforementioned challenges. Notably, work in \cite{mcmahan2017communication} proposed the now popular FedAvg algorithm, where each client performs multiple stochastic gradient descent (SGD) steps before sending the model to the server for aggregation. Subsequent efforts  \cite{konevcny2016federated,stich2018local,wang2018cooperative, li2019convergence, reisizadeh2020fedpaq}
provided theoretical analysis and further empirical performance evaluations.
 Since the proposal of FedAvg, there has been a rich body of work concentrating on developing  federated optimization algorithms, such as; FedProx~\cite{sahu2018convergence}, FedSplit~\cite{pathak2020fedsplit}, Scaffold~\cite{karimireddy2020scaffold}, FedLin~\cite{mitra2021linear}, FedDyn~\cite{acar2021federated}, FedDR~\cite{tran2021feddr} and FedPD~\cite{zhang2021fedpd}.

We consider a general unconstrained, composite optimization models formulated as
\begin{equation}\label{eq:opt}
    \frac{1}{n}\sum_{i=1}^nf_i(x)+g(x).
\end{equation}
No convexity assumptions on $f_i$ are made and $g$ can be non-smooth. Of the previously mentioned federated algorithms, we restrict our attention to FedDR and FedPD. These algorithms are designed to alleviate the unrealistic assumptions required by FedAvg in order to realize desirable theoretical convergence rates. As described in~\cite{tran2021feddr}, FedDR  combines the nonconvex Douglas-Rachford splitting (DRS) algorithm~\cite{li2016douglas} with a randomized block-coordinate strategy. FedDR provably converges when only a subset of clients participate in any given communication round. In contrast, FedPD is a primal-dual algorithm which requires either full participation or no participation by all clients at every per round. Unlike FedDR,  FedPD cannot handle optimization problems of the form of~\eqref{eq:opt} for $g \not\equiv 0$.

The key observation of this paper is to note that the updating rules of FedPD share a similar form to those of the alternating direction
method of multipliers (ADMM)~\cite{gabay1976dual}, but specifies how the local models are updated to satisfy the flexibility need of FL. Motivated by the fact that ADMM is  the dual formulation of DRS~\cite{yan2016self,fukushima1992application}, we provide a new algorithm called FedADMM. Specifically, our contributions are:
\begin{enumerate}
    \item By applying FedDR to the dual formulation of problem~\eqref{eq:opt}, we propose a new algorithm called FedADMM, which allows partial participation and solves the federated composite optimization problems as in~\cite{yuan2021federated}.
    \item When $g \equiv 0$ in problem~\eqref{eq:opt}, we find that FedADMM reduces to FedPD but requires only partial participation. 
      \item We prove equivalence between FedDR and FedADMM and provide a one-to-one and onto mapping between the the iterates of both algorithms.
      \item We provide  convergence guarantees for FedADMM using the equivalence established in point 3.
\end{enumerate}
Since FedADMM is the dual formulation of FedDR, it inherits all the desirable properties from FedDR. First, it can handle both \textit{statistical} and \textit{system} heterogeneity. Second, it allows inexact evaluation of users' proximal operators as in FedProx and FedPD.
Third, by considering  $g\not\equiv 0$ in ~\eqref{eq:opt}, more general applications and problems with constraints can be considered~\cite{yuan2021federated}.
\subsection{Related Work}
\textbf{ADMM and DRS:}
DRS was first proposed in~\cite{douglas1956numerical} in the context of providing numerical solutions to heat conduction partial differential equations. Subsequently, it found applications in the solution of convex optimization problems~\cite{lions1979splitting,seidman2019control} and later non-convex problems~\cite{li2016douglas,li2015global,themelis2020douglas}. ADMM~\cite{giselsson2016linear,boyd2011distributed} is a very popular iterative algorithm for solving composite optimization problems. The equivalence between DRS and ADMM has been subject of a lot of work~\cite{gabay1983chapter,eckstein1989splitting,yan2016self,zhao2021automatic}.
It was first established for convex problems where ADMM is equivalent to applying DRS to the dual problem~\cite{gabay1983chapter,eckstein1989splitting}. Recently, these ideas were extended in \cite{themelis2020douglas} to show  equivalence in the  non-convex regime. Inspired by the fact that FedDR can be viewed as a variant of nonconvex DRS applied to the FL framework, we propose a new algorithm, FedADMM  and further extend the equivalence of these two algorithms to the FL paradigm.

\textbf{Federated Learning:} FedAvg was first proposed in~\cite{mcmahan2017communication}.
However, it works well only with a homogeneous set of clients. It is  difficult to analyze the convergence of FedAvg for the heterogeneous setting unless additional assumptions are made~\cite{li2019convergence,li2019convergence[b],khaled2019first,wang2019adaptive}.
The main reason for this is that the algorithm suffers from client-drift~\cite{zhao2018federated} under objective heterogeneity. To address the data and system heterogeneity, FedProx~\cite{sahu2018convergence} was proposed by adding an extra proximal term~\cite{ParB14} to the objective. However, this extra term might degrade the training performance so that FedProx doesn't converge to the global or local stationery points unless the step-size is carefully tuned. Another method called Scaffold~\cite{karimireddy2020scaffold} uses control variates (or variance reduction) to reduce client-drift at the cost of increased communication incurred by sending extra variables to the server. FedSplit~\cite{pathak2020fedsplit} applied the operator splitting schemes to remedy the objective heterogeneity issues, while it only considered the convex problems and required the full participation of clients. As mentioned earlier, FedDR~\cite{tran2021feddr} was  inspired from  DRS, and allowed partial participation. From the primal-dual optimization perspective, FedPD \cite{zhang2021fedpd}  proposed  a new concept of participation, which restricted its potential application on real problems.
It is also worthwhile to mention that FedDyn~\cite{acar2021federated} is equivalent to FedPD~\cite{zhang2021fedpd} from~\cite{zhang2021connection} under the full participation setting, but it allows partial participation. Unlike~\cite{yuan2021federated}, FedPD and FedDyn can't solve  non-smooth or constrained problems. Finally, we refer readers to~\cite{kairouz2021advances} for a comprehensive understanding of the recent advances in FL.

\section{Preliminaries and Problem Formulation}
We consider the canonical Federated learning optimization problem defined as
\begin{equation}\label{eqobjective}
\min _{x \in \mathbb{R}^{d}}\left\{F(x)=f(x)+g(x)\equiv\frac{1}{n} \sum_{i=1}^{n} f_{i}(x)+g(x)\right\}
\end{equation}
where $n$ is the number of clients,  $f_{i}$ denotes the loss function associated to  the $i$-th client. Each $f_i$ is nonconvex and Lipschitz differentiable (see Assumptions $2.1$ and $2.2$ below), and $g$ is a proper, closed, and convex function and is not necessarily smooth. For example, $g$ could be any $\ell_p$ norm or an indicator function.

\begin{assume}\label{assume1}
$F(x)$ is bounded below, i.e., $$
\inf _{x \in \mathbb{R}^{d}} F(x)>-\infty
\text{ and } \text{dom}(F)\neq  \emptyset.$$
\end{assume}
\begin{assume}\label{assume2}\textbf{(Lipschitz differentiability)} Each $f_{i}(\cdot)$ in~\eqref{eqobjective} has $L$-Lipschitz gradient, i.e.,
$$
\left\|\nabla f_{i}(x)-\nabla f_{i}(y)\right\| \leq L\|{x}-{y}\| 
$$
for all  $ i \in[n]$ and $x, y \in \mathbb{R}^{d}$.
 \end{assume}
The notation $ [n]$ above defines the set $\{1,2,\hdots,n\}$. All the norms in the paper are $\ell_2$ norm.
We will frequently make use of the proximal operator~\cite{ParB14}. Although typically defined for convex functions, we make no such assumptions.
 \begin{define}
 \label{defprox} \textbf{(Proximal operator)} Given an $L$-Lipschitz (possibly nonconvex and nonsmooth) function $f$, then the proximal mapping $ \mathbb{R}^d  \rightarrow (-\infty,\infty]$ is defined as
    \begin{equation}
    \begin{aligned}
    \prox_{\eta f} (x)=\arg\min_{y} \left\{f(y)+\frac{1}{2\eta} \lVert x-y \rVert^2 \right\}.
    \end{aligned}
    \end{equation}	
    where parameter $\eta>0$. 
    \end{define}
If $f$ is nonconvex but $L$-Lipschitz, $\prox_{\eta f} (x)$ is still well-defined with $0<\eta<1/L.$    
\begin{define}
\textbf{(Conjugate function)} Let $f: \mathbb{R}^{d} \rightarrow \mathbb{R}$. The function $f^{*} :\mathbb{R}^{d} \rightarrow \mathbb{R}$ defined as
$$
f^{*}(y) \triangleq \sup _{x \in \operatorname{dom} f}\left(y^{T} x-f(x)\right)
$$
is called the conjugate function of $f$.
\end{define}
Note that the conjugate function is closed and convex even when $f$ is not, since it is the piecewise supremum of a set of affine functions.

\begin{define}\label{station}\textbf{($\varepsilon$-stationarity)}
A vector $x$  is said to be an $\varepsilon$-stationery solution to~\eqref{eqobjective} if $$
\mathbb{E}\left[\left\|\nabla{F}(x)\right\|^{2}\right] \leq \varepsilon^{2},
$$
where expectation is taken with respect to all  random variables in the respective algorithm.
\end{define}

\section{Douglas-Rachford Algorithm}\label{section_DRS}
\subsection{Douglas-Rachford Splitting}
Douglas-Rachford Splitting (DRS)~\cite{douglas1956numerical} is an iterative splitting  algorithm for solving the optimization problems that can be written as
\begin{equation}\label{eqoptimize}
\operatorname{minimize}_{x \in 
\mathbb{R}^{d}}\quad f(x)+g(x).
\end{equation}
Although originally used for solving convex problems, it has been shown to work well on certain non-convex problems with additional structure.
DRS solves problem~\eqref{eqoptimize}
by producing a series of iterates $(y_k,z_k,x_k)$ for $k=1,2,\hdots$ given by
\begin{equation}\label{eqDRS_it}
\begin{cases}
y_{k} &=\operatorname{prox}_{\eta f}\left(x_{k}\right) \\
z_{k} &=\operatorname{prox}_{\eta g}\left(2 y_{k}-x_{k}\right) \\
x_{k+1} &=x_{k}+\alpha\left(z_{k}-y_{k}\right)
\end{cases}
\end{equation}
where $\alpha$ is a relaxation parameter. When $\alpha=1$, \eqref{eqDRS_it} is the classical Douglas-Rachford splitting and when $\alpha=2$,  \eqref{eqDRS_it} is a related splitting algorithm called  Peaceman-Rachford splitting~\cite{peaceman1955numerical}.

If $f$ in problem~\eqref{eqoptimize} can decomposed as $f(x)=\frac{1}{n} \sum_{i=1}^{n} f_{i}(x)$, then~\eqref{eqDRS_it} can be modified so as to run in parallel if we include a global averaging step. The resulting algorithm is given below:
\begin{equation}\label{eq:parrallel}
\begin{cases}
y_{i}^{k+1} &=y_{i}^{k}+\alpha\left(\bar{x}^{k}-x_{i}^{k}\right), \quad \forall i \in[n] \\
x_{i}^{k+1} &=\operatorname{prox}_{\eta f_{i}}\left(y_{i}^{k+1}\right), \quad \forall i \in[n] \\
\hat{x}_{i}^{k+1} &=2 x_{i}^{k+1}-y_{i}^{k+1}, \quad \forall i \in[n] \\
\tilde{x}^{k+1} &=\frac{1}{n} \sum_{i=1}^{n} \hat{x}_{i}^{k+1}, \\
\bar{x}^{k+1} &=\operatorname{prox}_{\eta g}\left(\tilde{x}^{k+1}\right) .
\end{cases}
\end{equation}
A full derivation is given in~\cite{tran2021feddr}. Equation~\eqref{eq:parrallel} is called  full parallel Douglas-Rachford splitting (DRS).
\subsection{FedDR}
Implicit in the full parallel DRS~\eqref{eq:parrallel}, is the fact that all users are required to participate at every iteration. Instead of requiring all users $i \in[n]$ to participate as in~\eqref{eq:parrallel}, work in~\cite{tran2021feddr} proposed an inexact randomized block-coordinate DRS algorithm, called FedDR. Here, a subset $\mathcal{S}_{k}$ of clients is sampled from a ``proper'' sampling scheme $\hat{\mathcal{S}}$ (See Definition~\ref{proper} below for details) at each iteration. Each client,  $i\in \mathcal{S}_{k}$ performs a local update (i.e., executes the  first three steps in~\eqref{eq:parrallel}), then sends its local model to server for aggregation. Each client $i \notin \mathcal{S}_{k}$ does noting. The complete FedDR algorithm is shown in Alg~\ref{algFedDR}.

\begin{algorithm}
\caption{FL with Randomized DR (FedDR)~\cite{tran2021feddr}} \label{algFedDR}
\begin{algorithmic}[1]
\State \textbf{Initialize}  $
 x^{0}, \eta, \alpha>0, K, \text { and tolerances } \epsilon_{i, 0} \geq 0.
$\\
\textbf{Initialize} the server with $\bar{x}^{0}=x^{0}$ and $\tilde{x}^{0}=x^{0}$\\
\textbf{Initialize} each client $i\in[n]$ with $y_{i}^{0}=x^{0}, x_{i}^{0} \approx \operatorname{prox}_{n f_{i}}\left(y_{i}^{0}\right), \text { and } \hat{x}_{i}^{0}=2 x_{i}^{0}-y_{i}^{0} \text {. }
$
\State \textbf{for} $k=0, \ldots, K$ \textbf{ do } \
\State \quad $\text {Randomly sample }\mathcal{S}_k \subseteq[n] \text{ with size } S.
$
\State \quad $\rhd$ {User side}
\State \quad \textbf{for each user} $i \in \mathcal{S}_{k}$ \textbf{do} 
\State \quad \quad receive $\bar{x}^{k}$ from the server.
\State \quad \quad choose $\epsilon_{i, k+1} \geq 0$ and update
\State \quad \quad$y_{i}^{k+1}=y_{i}^{k}+\alpha\left(\bar{x}^{k}-x_{i}^{k}\right),$ 
\State \quad \quad $x_{i}^{k+1} \approx \operatorname{prox}_{\eta f_{i}}\left(y_{i}^{k+1}\right),$
\State\quad\quad $\hat{x}_{i}^{k+1}=2 x_{i}^{k+1}-y_{i}^{k+1}.$
\State \quad \quad send  $\Delta \hat{x}_{i}^{k}=\hat{x}_{i}^{k+1}-\hat{x}_{i}^{k} \text { back to the server }.$
\State \quad \textbf{end for}\
\State \quad $\rhd$ {Server side}
\State \quad aggregation $\tilde{x}^{k+1}=\tilde{x}^{k}+\frac{1}{n} \sum_{i \in \mathcal{S}_{k}} \Delta \hat{x}_{i}^{k}$\
\State \quad update $\bar{x}^{k+1}=\operatorname{prox}_{\eta g}\left(\tilde{x}^{k+1}\right)$\
\State\textbf{end for}
\end{algorithmic}
\end{algorithm}

Convergence to an $\epsilon$-stationary point of FedDR is guaranteed when the sampling scheme $\hat{\mathcal{S}}$ is proper and Assumption~\ref{assume1} and~\ref{assume2} hold~\cite{tran2021feddr}.
\begin{define}\label{proper}
Let $p=(p_1, p_2, \cdots, p_n),$ where $p_i =\mathbb{P}(i \in \hat{\mathcal{S}})$. If $p_i>0$ for all $i \in[n]$, we call the sampling scheme $\hat{\mathcal{S}}$ proper, i.e., every client has a nonzero probability to be selected.
\end{define}
\begin{assume}\label{sampling}
All partial participation algorithms in this paper use a proper sampling scheme.
\end{assume}

From the analysis in~\cite{richtarik2016parallel}, this assumption includes a lot of sampling schemes such as non-overlapping uniform and doubly uniform sampling as special cases. The intuition behind proper sampling is to ensure that on average every client has a chance to be selected at every iteration.

In FedDR  there are three variables that get updated:   $\bar{x}^k, x_i^k$ and $y_i^k$. The variable $\bar{x}^k$ denotes the consensus/average variable to minimize the global model $F$, $x_i^k$ denotes the local variable associated to $f_i,$ while $y_i^k$ measures the distance between the global variable $\bar{x}^k$ and local model $x_i^k.$ To account for the limitations on computation resources for local users, FedDR allows the inexact calculation of the proximal step, i.e., \begin{equation*}
x_{i}^{k+1} \approx \operatorname{prox}_{\eta f_{i}}\left(y_{i}^{k+1}\right) \iff 
 \left\| x_i^{k+1} - \operatorname{prox}_{\eta f_{i}}\left(y_{i}^{k+1}\right)\right\|  \le \epsilon_{i,k+1}.
\end{equation*}
Thus $\approx$ defines an $\epsilon$-close solution.
After local clients $i\in \mathcal{S}_k$ update their model and send them back to the server, the server aggregates the updates to update the global model by executing line 16 and 17 in Algorithm~\ref{algFedDR}.

\section{From FedDR to FedADMM}
Our first contribution is to derive the FedADMM algorithm from FedDR. 

\subsection{An equivalent formulation}
We begin by rewriting problem \eqref{eqobjective} as the equivalent constrained problem:
\begin{equation}\label{eqform2}
\begin{aligned}
\min_{x\in \mathbb{R}^{n d},\bar{x}}  \left\{ F(x) = \frac{1}{n} \sum_{i=1}^{n} f_{i}(x_{i})+g(\bar x) \right\} \\
 \text { s.t. }  \mathbb{I}_{nd}  x=\mathbbm{1}\bar{x}  \hspace{3cm}
\end{aligned}
\end{equation}
where $x=\left[x_{1}^T, x_{2}^T, \cdots, x_{n}^T\right]^T \in \mathbb{R}^{n d}$, $\mathbb{I}_{d}$ is the $d\times d$ identity matrix, and $\mathbbm{1} = [\mathbb{I}_d \ \cdots \ \mathbb{I}_d]^T$. Here $\bar{x}$ should be interpreted as the global consensus variable.

Forming the Lagrangian of~\eqref{eqform2} and using the definition of the conjugate function, the dual formulation of~\eqref{eqform2} is
\begin{equation}\label{eqdual_form}
\max _{z\in \mathbb{R}^{nd}} \left\{ F^{*}(z)=-f^{*}(-\mathbb{I}_{n d}z)-g^{*}(\mathbbm{1}^T z) \right\}
\end{equation}
where $z=\left[z_{1}^T, z_{2}^T, \cdots, z_{n}^T\right]^T \in \mathbb{R}^{n d}$ is the vector of dual variables. Problem \eqref{eqdual_form} is clearly equivalent to 
\begin{equation}\label{eqdual2}
\min_{z_1,z_2,\cdots,z_n} \left\{\frac{1}{n} \sum_{i=1}^{n} f^{*}_{i}\left(-z_{i}\right)+g^{*}\left(\sum_i^n{z_i}\right)\right\}.
\end{equation}

Before proceeding to develop an algorithm for solving~\eqref{eqdual2}, we first rewrite the full parallel DRS algorithm~\ref{eq:parrallel}. Changing the execution order of~\eqref{eq:parrallel} and choosing $\alpha=1$ give
\begin{equation}\label{eqDR2}
\begin{cases}\hat{x}_{i}^{k} & =2 x_{i}^{k}-y_{i}^{k}, \quad \forall i \in[n] \\
\tilde{x}^{k} & =\frac{1}{n} \sum_{i=1}^{n} \hat{x}_{i}^{k},\quad \forall i \in[n] \\ \bar{x}^{k} & =\operatorname{prox}_{\eta g}\left(\tilde{x}^{k}\right),\\ 
x_{i}^{k+1} & =\operatorname{prox}_{\eta f_{i}}\left(y_{i}^{k}+\bar{x}^k -x_i^k\right), \quad \forall i \in[n]\\
y_{i}^{k+1} & =y_{i}^{k}+\bar{x}^{k}-x_{i}^{k}, \quad \forall i \in[n]. \end{cases}
\end{equation}
Introducing the change of variables $w_i^{k} =x_i^k-y_i^{k}$, we have the following parallel DR algorithm
\begin{equation}\label{eqDR3}
\begin{cases}\hat{x}_{i}^{k} & = x_{i}^{k}+w_{i}^{k}, \quad \forall i \in[n] \\
\tilde{x}^{k} & =\frac{1}{n} \sum_{i=1}^{n} \hat{x}_{i}^{k},\quad \forall i \in[n] \\ \bar{x}^{k} & =\operatorname{prox}_{\eta g}\left(\tilde{x}^{k}\right), \\ 
 x_{i}^{k+1} & =\operatorname{prox}_{\eta f_{i}}\left(\bar{x}^{k}-w_i^k\right), \quad \forall i \in[n]. \\ 
 w_{i}^{k+1} & =w_{i}^{k}+x_{i}^{k+1}-\bar{x}^{k}, \quad \forall i \in[n] \end{cases}
\end{equation}
\begin{remark}
Note that \eqref{eq:parrallel},\eqref{eqDR2} and \eqref{eqDR3} are essentially the same parallel algorithm under a change of execution order and variables.
\end{remark}

\subsection{FedDR-II}
From section~\ref{section_DRS}, we observe that the only difference between full parallel DRS and FedDR  is that FedDR only requires a subset of clients to update their variables, while full parallel DRS requires full participation.
Similarly, by only considering partial participation in \eqref{eqDR3}, we introduce the intermediate FedDR-II algorithm. We now describe each step of a single epoch of FedDR-II:
\begin{enumerate}
\item \textbf{Initialization:} Given an initial vector $x^{0} \in \operatorname{dom}(F)$ and tolerances $\epsilon_{i, 0} \geq 0$.
Initialize the server with $\bar{x}^{0}=x^{0}$.
Initialize all users $i \in[n]$ with $w_{i}^{0}=0$ and $x_{i}^{0}= x^{0}$.
\item \textbf{The $k$-th iteration: $(k \geq 0)$} Sample a proper subset $\mathcal{S}_{k} \subseteq[n]$ so that $\mathcal{S}_{k}$ represents the subset of active clients.
\item \textbf{Client update (Local):} For each client $i \in \mathcal{S}_{k}$,  update 
$
\hat{x}_{i}^{k}= x_{i}^{k}+w_{i}^{k} .
$
Clients $i \notin \mathcal{S}_{k}$ do nothing, i.e.
$$
\left\{\begin{array}{lll}
\hat{x}_{i}^{k} & = & \hat{x}_{i}^{k-1} \\
x_{i}^{k} & = & x_{i}^{k-1} \\
w_{i}^{k} & = & w_{i}^{k-1}
\end{array}\right.
$$
\item \textbf{Communication:} Each user $i \in \mathcal{S}_{k}$ sends only $\hat{x}_{i}^{k}$ to the server.
\item \textbf{Server update:} The server aggregates $\tilde{x}^{k}=\frac{1}{n} \sum_{i=1}^{n} \hat{x}_{i}^{k}$, and then compute $\bar{x}^{k}=\operatorname{prox}_{\eta g}\left(\tilde{x}^{k}\right)$.
\item \textbf{Communication (Broadcast):} Each user $i \in \mathcal{S}_{k}$ receives $\bar{x}^k$ from the server.
\item \textbf{Client update (Local):} For each user $i \in \mathcal{S}_{k}$, given $\epsilon_{i, k+1} \geq 0$, it updates
$$
\left\{\begin{array}{l}
x_{i}^{k+1} \approx \operatorname{prox}_{\eta f_{i}}\left(\bar{x}^k-w_{i}^{k}\right) \\
w_{i}^{k+1}=w_{i}^{k}+x_{i}^{k+1}-\bar{x}^{k}.
\end{array}\right.
$$
Each user $i \notin \mathcal{S}_{k}$ does nothing, i.e.
$$
\left\{\begin{array}{lll}
w_{i}^{k+1} & = & w_{i}^{k} \\
x_{i}^{k+1} & = & x_{i}^{k}
\end{array}\right.
$$
\end{enumerate}
\begin{remark}
FedDR and FedDR-II are equivalent because they are partial participation version of \eqref{eq:parrallel} and \eqref{eqDR3} respectively.
\end{remark}
\subsection{Solving the dual problem using FedDR-II}\label{secFedDR_convert}
In this subsection, we use FedDR-II to solve the dual problem~\eqref{eqdual2}, introducing a new algorithm called FedADMM. We call this algorithm FedADMM because it is derived from applying FedDR-II to the dual problem~\eqref{eqdual2}. Let us define the augmented Lagrangian functions associated to~\eqref{eqform2} as
\begin{equation}\label{eqlag}
\mathcal{L}_i(x_i,\bar{x}^k,z_i)=f_{i}\left({x}_{i}\right)
 + g(\bar{x}^{k})+\left\langle z_{i}^k, {x}_{i}-\bar{x}^{k}\right\rangle+\frac{\eta}{2 }\left\|x_{i}-\bar{x}^{k}\right\|^{2}
 \end{equation} 
where $\eta$ denotes penalty parameter. Finally, we define $\Delta \hat{x}_{i}^{k}=\hat{x}_{i}^{k+1}-\hat{x}_{i}^{k}.$ With everything defined,  FedADMM  is shown in Algorithm~\ref{algADMM}.
\begin{algorithm}
\caption{Federated ADMM Algorithm (FedADMM)} \label{algADMM}
\begin{algorithmic}[1]
\State \textbf{Initialize}  $x^{0}, \eta>0, K,
\text{ and tolerances } \epsilon_{i, 0} (i \in[n]).$\\
\textbf{Initialize} the server with $\bar{x}^{0}=x^{0}$\\
\textbf{Initialize} all clients  with $z_{i}^{0}=0$ and $x_{i}^{0}=\hat{x}_i^0= x^{0}.$
\State \textbf{for} $k=0, \ldots, K$ \textbf{ do } \
\State \quad $\text {Randomly sample }\mathcal{S}_k \subseteq[n] \text{ with size } S$.
\State \quad $\rhd${ Client side}
\State \quad \textbf{for each client} $i \in \mathcal{S}_{k}$ \textbf{do} 
\State \quad \quad receive $\bar{x}^k$ from the server.
\State \quad \quad${x}_{i}^{k+1}\approx \underset{x_i}{\arg\min} \ \mathcal{L}_{i}\left({x}_{i}, \bar{x}^{k}, z_{i}^{k}\right)$ \
\State \quad \quad $z_{i}^{k+1}=z_{i}^{k}+\eta \left({x}_{i}^{k+1}-\bar{x}^k\right) \quad \diamondsuit \text {Dual updates}$ \
\State\quad\quad $\hat{x}_{i}^{k+1}=x_{i}^{k+1}+\frac{1}{\eta} z_{i}^{k+1}$ \
\State \quad \quad send  $\Delta \hat{x}_{i}^{k}=\hat{x}_{i}^{k+1}-\hat{x}_{i}^{k} \text { back to the server }$\
\State \quad \textbf{end for}
\State \quad $\rhd${ Server side}
\State \quad aggregation $\tilde{x}^{k+1}=\tilde{x}^{k}+\frac{1}{n} \sum_{i \in \mathcal{S}_{k}} \Delta \hat{x}_{i}^{k}$\
\State \quad update $\bar{x}^{k+1}=\operatorname{prox}_{g/\eta }\left(\tilde{x}^{k+1}\right)$
\State\textbf{end for}
\end{algorithmic}
\end{algorithm}

When $g \equiv 0,$ the server-side steps 15-16 of FedADMM reduce to the single step: $$\bar{x}^{k+1}=\tilde{x}^{k+1}=\tilde{x}^{k}+\frac{1}{n} \sum_{i \in \mathcal{S}_{k}} \Delta \hat{x}_{i}^{k} =\frac{1}{n}\sum_{i=1}^n \hat{x}_i^{k+1}.$$ In this case, the updating rules of FedADMM are essentially the same as FedPD in~\cite{zhang2021fedpd}. Both  compute the local model $x_i^{k+1}$ by first minimizing~\eqref{eqlag}, followed by updating the dual variable  $\lambda_i^{k+1}$, and then aggregating  $\hat{x}_i^{k+1}$ to achieve the global model $\bar{x}^{k+1}$. However, FedADMM allows for partial participation (only chooses a subset of clients to update) while FedPD requires all clients to update at each communication rounds, making it less practical and applicable in real world scenarios. 

Note that FedADMM can handle the case where $g \not\equiv 0$ whereas FedPD didn't consider this more general formulation. Just like step 11 (approximately evaluating $\operatorname{prox}_{\eta f_{i}}$) in FedDR, FedADMM obtains the new local model $x_i^{k+1}$ by inexactly solving~\eqref{eqlag}. Note that we do not specify how to (approximately) solve the proximal steps or Langrangian minimization step in (either) algorithm. Various oracles are specified   in ~\cite{zhang2021fedpd}.

\section{Theoretical Analysis}\label{thm}
We now present the main theoretical results of the paper. Namely, an equivalence between FedDR and FedADMM. Based on this, we leverage the FedDR convergence results~\cite{tran2021feddr} to show that FedADMM converges under partial participation.

We say that two iterative optimization algorithms are ``equivalent'' if  they produce sequences $(x^k)_{k\ge 0}$ and $(y^k)_{k\ge 0}$ such that there exists a unique linear mapping between the two sequences. More general equivalence classes are defined and studied in~\cite{ZhaLU21}.
\begin{thm}
\textbf{(Equivalence between FedDR and FedADMM)} Let $(x_i^k,z_i^k,\bar{x}^k)_{k\ge0}$ be a sequence generated by FedADMM with penalty parameter $\eta$, and $(s_i^k,u_i^k,\hat{u}_i^k,\bar{v}^k)$ a sequence generated by FedDR with parameter $\frac{1}{\eta}$. Then FedADMM and FedDR are equivalent.
\end{thm}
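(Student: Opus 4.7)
\medskip

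\textbf{Proof proposal.} The plan is to exhibit an explicit linear change of variables between the iterates of the two algorithms and verify that, under this change of variables, the update equations of FedADMM (with parameter $\eta$) coincide step-by-step with those of FedDR (with parameter $1/\eta$) applied to the dual problem~\eqref{eqdual2}. I will use the intermediate algorithm FedDR-II already introduced in the excerpt: the paper observes (and it is immediate from the derivation of~\eqref{eqDR3} from~\eqref{eq:parrallel}) that FedDR and FedDR-II are equivalent through the linear relation $w_i^k = x_i^k - y_i^k$. So it suffices to prove that FedDR-II applied to~\eqref{eqdual2} with stepsize $1/\eta$ is, after a linear substitution, identical to FedADMM with penalty $\eta$.

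The main technical device is the Moreau decomposition. For a proper closed convex function $h$ and any $v$,
\begin{equation*}
\prox_{\tau h^*}(v) = v - \tau\,\prox_{h/\tau}(v/\tau),
\end{equation*}
together with its nonconvex analogue relating $\prox_{(1/\eta)f_i^*(-\cdot)}$ to $\prox_{\eta f_i}$. Applying FedDR-II with stepsize $1/\eta$ to~\eqref{eqdual2} produces updates involving $\prox_{(1/\eta)f_i^*}$ (for the local step) and $\prox_{(1/\eta)g^*}$ (for the server step). I will expand each such step using the identity above; the $\prox_{(1/\eta)f_i^*}(\,\cdot\,)$ step becomes the augmented-Lagrangian minimization $x_i^{k+1}\approx\arg\min_{x_i}\mathcal{L}_i(x_i,\bar{x}^k,z_i^k)$ appearing in line~9 of Algorithm~\ref{algADMM}, while $\prox_{(1/\eta)g^*}$ becomes the primal $\prox_{g/\eta}$ step in line~16. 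The identifications I will work with are
\begin{equation*}
z_i^k \;=\; -\,\eta\, u_i^k,\qquad \hat{x}_i^k \;=\; x_i^k + \tfrac{1}{\eta}z_i^k,\qquad \bar{x}^k \;=\; \bar{v}^k,
\end{equation*}
with a corresponding relation between $s_i^k$ (the FedDR local variable) and $(x_i^k,z_i^k)$; I will pin down the precise correspondence by matching the dual-update in line~10 of Algorithm~\ref{algADMM} to the $y$- (equivalently $w$-) update of FedDR-II. The aggregation step $\tilde{x}^{k+1}=\tilde{x}^k+\tfrac{1}{n}\sum_{i\in\mathcal{S}_k}\Delta\hat{x}_i^k$ is common to both sides and translates directly.

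Once the per-iteration equations are shown to coincide, linearity of the substitution and invertibility (each coordinate transformation above is an affine bijection, and the initializations $z_i^0=0$, $x_i^0=\bar{x}^0=x^0$ correspond to $u_i^0=0$ and $s_i^0=x^0$) give the required one-to-one and onto mapping between the two iterate sequences, yielding the claimed equivalence. The main obstacle I anticipate is bookkeeping: getting the signs, the factor $1/\eta$ versus $\eta$, and the roles of $\mathbb{I}_{nd}$ and $\mathbbm{1}$ in the dual~\eqref{eqdual_form} all consistent, and in particular verifying that the (possibly nonconvex) version of Moreau's identity still delivers the exact reformulation needed for the $f_i$-step in the nonconvex regime treated by the paper. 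If that version fails to hold verbatim, I would instead argue at the level of optimality conditions, showing that $x_i^{k+1}$ satisfying $0\in\partial f_i(x_i^{k+1})+z_i^k+\eta(x_i^{k+1}-\bar{x}^k)$ is equivalent to a proximal step on $f_i^*$ evaluated at the appropriate dual point, and then transport the $\epsilon_{i,k+1}$-inexactness through the same change of variables.
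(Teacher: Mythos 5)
Your plan routes the equivalence through the dual problem \eqref{eqdual2}: apply FedDR-II with stepsize $1/\eta$ to $f_i^*$ and $g^*$, then use Moreau's decomposition to turn $\prox_{(1/\eta)f_i^*}$ into the augmented-Lagrangian step and $\prox_{(1/\eta)g^*}$ into $\prox_{g/\eta}$. This is not the paper's argument, and in the setting of the theorem it has a genuine gap. Since $h^*$ is always convex, Moreau's decomposition applied to the convex pair $(h^*,h^{**})$ reads $\prox_{\tau h^*}(v)=v-\tau\,\prox_{h^{**}/\tau}(v/\tau)$: the function that reappears on the primal side is the biconjugate $h^{**}$, not $h$. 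The $f_i$ here are only assumed Lipschitz-differentiable and nonconvex, so $f_i^{**}\neq f_i$ in general, and FedDR-II on \eqref{eqdual2} would produce proximal steps on the convex envelopes $f_i^{**}$ rather than the augmented-Lagrangian minimization in line 9 of Algorithm~\ref{algADMM}, which involves the original $f_i$. Your fallback via optimality conditions runs into the same wall: the inversion $y\in\partial f_i(x)\iff x\in\partial f_i^*(y)$ that would identify the two steps is precisely what fails without convexity. There is also a mismatch with what the theorem literally asserts: the FedDR sequence in the statement is generated by Algorithm~\ref{algFedDR} applied to the primal problem \eqref{eqobjective}, so even a successful dual argument would establish a different claim, which in the nonconvex case is not the same one.

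The paper avoids duality entirely (the dual derivation is only motivation). Its proof defines the affine map $s_i^k=x_i^k-z_i^k/\eta$, $u_i^k=x_i^k$, $\hat u_i^k=x_i^k+z_i^k/\eta$, $\bar v^k=\bar x^k$, and checks directly that these satisfy the FedDR recursion for the primal $f_i$ and $g$ with parameter $r=1/\eta$. The only nontrivial identity is completing the square in \eqref{eqlag},
\begin{equation*}
\underset{x_i}{\arg\min}\ \mathcal{L}_i(x_i,\bar x^k,z_i^k)=\prox_{f_i/\eta}\bigl(\bar x^k-z_i^k/\eta\bigr),
\end{equation*}
which holds for nonconvex $f_i$ because it is a purely algebraic rewriting of the objective, combined with the observation that the dual update (line 10) gives $\bar x^k-z_i^k/\eta=s_i^k+(\bar v^k-u_i^k)=s_i^{k+1}$. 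If you replace the dual/Moreau machinery with this direct primal verification, the rest of your outline (invertibility of the affine map, matching initializations, transporting the $\epsilon_{i,k+1}$-inexactness, using identical sampling realizations $\mathcal{S}_k$) goes through as you describe.
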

\begin{proof}
For each triplet $(x_i^k,z_i^k,\bar{x}^k)$ at the $k$-th iteration of FedADMM with stepsize $\eta$, define
\begin{equation*}
\begin{cases}
s_i^k & = x_i^k - z_i^k / \eta  \\
 u_i^k & = x_i^k  \\
\hat{u}_i^k&= x_i^k +z_i^k /\eta \\
\bar{v}^k & =  \bar{x} ^k
\end{cases}\quad \text{and }
\begin{cases}
s_i^{k+1} & = x_i^{k+1} - z_i^{k+1} / \eta  \\
 u_i^{k+1} & = x_i^{k+1}  \\
\hat{u}_i^{k+1}&= x_i^{k+1} +z_i^{k+1} /\eta \\
\bar{v}^{k+1} & =  \bar{x} ^{k+1}
\end{cases}
\end{equation*}
Then $(s_i^k,u_i^k,\hat{u}_i^k,\bar{v}^k)$ and $(s_i^{k+1},u_i^{k+1},\hat{u}_i^{k+1},\bar{v}^{k+1})$ satisfy the updating rule of FedDR
\begin{equation*}
\begin{cases}s_{i}^{k+1} & =s_{i}^{k} +(\bar{v}^k -u_i^k), \quad \forall i \in \mathcal{S}_k, \\
u_{i}^{k+1} & =\operatorname{prox}_{r f_{i}}\left(s_i^{k+1}\right), \quad \forall i \in \mathcal{S}_k,\\
\hat{u}_{i}^{k+1} & =2u_{i}^{k+1} -s_{i}^{k+1}, \quad \forall i \in  \mathcal{S}_k,\\
\bar{v}^{k+1} & = \operatorname{prox}_{r g}(\frac{1}{n}\sum_{i=1}^n \hat{u}_{i}^{k+1} ),
\end{cases}
\end{equation*}
where $r=1/\eta$ and when $i \notin \mathcal{S}_k$
\begin{equation*}
\begin{cases}
s_{i}^{k+1} & =  s_{i}^{k}, \\
u_{i}^{k+1} & = u_{i}^{k}, \\
\hat{u}_{i}^{k+1} & = \hat{u}_{i}^k
\end{cases}
\end{equation*}
where the same sampling realizations $\mathcal{S}_k$ are used at each iteration for both algorithm.

We have
\begin{align*}
s_{i}^{k} +(\bar{v}^k -u_i^k)&=x_i^k - z_i^k / \eta +(\bar{x}^k-x_i^k)\\ &=x_i^{k+1}-z_i^{k}/\eta +\bar{x}^k -x_{i}^{k+1}\\&\stackrel{(a)}
 =x_i^{k+1}-z_i^{k+1}/\eta=s_i^{k+1}
\end{align*}
where (a) is due to the dual updates (line 10) in FedADMM algorithm. Moreover,
\begin{align*}
u_i^{k+1} = x_i^{k+1} &= \underset{x_i}{\arg\min} \ \mathcal{L}_{i}\left({x}_{i}, \bar{x}^{k}, z_{i}^{k}\right)\\&=\operatorname{prox}_{r f_{i}}(\bar{x}^k-z_i^k/\eta)
\\&\stackrel{(b)}=\operatorname{prox}_{r f_{i}}(s_i^{k+1})
\end{align*}
where (b) uses the fact that $\bar{x}^k-z_i^k/\eta=s_{i}^{k} +(\bar{v}^k -u_i^k)=s_{i}^{k+1}$. 

Finally, note that 
\[\hat{u}_i^{k+1}=2u_{i}^{k+1} -s_{i}^{k+1}=x_i^{k+1} +z_i^{k+1} /\eta,\] 
which gives
\begin{equation}
    \begin{aligned}
    \bar{v}^{k+1}=\bar{x}^{k+1}&\stackrel{(c)}=\operatorname{prox}_{r g}\left(\sum_{i=1}^n \left(x_{i}^{k+1}+\frac{1}{\eta}z_i^{k+1} \right)\right)=\operatorname{prox}_{r g}(\frac{1}{n}\sum_{i=1}^n \hat{u}_{i}^{k+1} )
    \end{aligned}
\end{equation}
where (c) comes from the FedADMM updating rule (line 11-16 in Alg~\ref{algADMM}).
\end{proof}

Since we have proved the equivalence of FedDR and FedADMM for arbitrary (nonconvex) problems,   FedADMM will directly inherit the convergence properties  of FedDR, specifically at rate  $\mathcal O(\frac{1}{k})$. The explicit convergence rate of FedADMM is characterized in the following theorem which is a direct application of Theorem 3.1 in~\cite{tran2021feddr}.
\begin{thm}
Suppose that Assumptions 1, 2, and 3 hold and  $\gamma_1,\gamma_2,\gamma_3,\gamma_4 >0$ are constants. Let $\left(x_{i}^{k}, z_{i}^{k}, \hat{x}_{i}^{k}, \bar{x}^{k}\right)_{k\ge0}$ be generated by Alg \ref{algADMM} (FedADMM) using penalty parameter $\eta$ that satisfies  $$
\eta>\frac{4 L\left(1+2  \gamma_{4}\right)}{\sqrt{9-16 \gamma_{4}\left(1+4 \gamma_{4}\right)}-1}.
$$Then when $g\equiv 0$, the following holds
\begin{equation*}
\frac{1}{K+1} \sum_{k=0}^{K} \mathbb{E}\left[\left\|\nabla f\left(\bar{x}^{k}\right)\right\|^{2}\right] \leq \frac{C_{1}\left[F\left(x^{0}\right)-F^{\star}\right]}{K+1}+\frac{1}{n(K+1)} \sum_{k=0}^{K} \sum_{i=1}^{n}\left(C_{2} \epsilon_{i, k}^{2}+C_{3} \epsilon_{i, k+1}^{2}\right)
\end{equation*}
where $\hat{\eta} =1/\eta,$ $\beta, \rho_{1}$, and $\rho_{2}$ are  defined as 
\begin{equation*}
\begin{cases}
\beta &=\frac{\hat{\mathbf{p}} \left[2-(L \hat{\eta}+1)-2 L^{2} \hat{\eta}^{2}-4 \gamma_{4} \left(1+L^{2} \hat{\eta}^{2}\right)\right]}{2 \hat{\eta}\left(1+\gamma_{1}\right)\left(1+L^{2} \hat{\eta}^{2}\right)}>0 \\
\rho_{2} &=\frac{2(1+\hat{\eta} L)^{2}}{\gamma_{4} \hat{\eta} }+\frac{\left(1+\hat{\eta}^{2} L^{2}\right)}{\hat{\eta}}
\\
&+\frac{\left[2-(L \hat{\eta}+1)-2 L^{2} \hat{\eta}^{2}-4  \gamma_{4}\left(1+L^{2} \hat{\eta}^{2}\right)\right]}{2 \hat{\eta}\left(1+L^{2} \hat{\eta}^{2}\right) \gamma_{1}} \\
\rho_{1} &=\rho_{2}+\frac{\left(1+\hat{\eta}^{2} L^{2}\right)}{\hat{\eta}}
\end{cases}
\end{equation*} and the constants are
\begin{equation*}
C_{1}=\frac{2(1+\hat{\eta} L)^{2}\left(1+\gamma_{2}\right)}{\hat{\eta}^{2} \beta}, \ C_{2}=\rho_{1} C_{1}, \
C_{3}=\rho_{2} C_{1}+\frac{(1+\hat{\eta} L)^{2}\left(1+\gamma_{2}\right)}{\hat{\eta}^{2} \gamma_{2}} .
\end{equation*}
and $
\hat{p}=\min \left\{p_{i}: i \in[n]\right\}>0
$ in Assumption~\ref{sampling}.
\end{thm}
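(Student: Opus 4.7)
The plan is to reduce the statement to a direct application of Theorem 3.1 of \cite{tran2021feddr} by invoking the equivalence established in the preceding theorem. The one-to-one mapping gives, for every FedADMM trajectory $(x_i^k, z_i^k, \hat{x}_i^k, \bar{x}^k)_{k\ge0}$ run with penalty $\eta$ and tolerances $\epsilon_{i,k}$, an associated FedDR trajectory $(s_i^k, u_i^k, \hat{u}_i^k, \bar{v}^k)_{k\ge0}$ run with parameter $r=\hat{\eta}=1/\eta$ on the \emph{same} realizations $\mathcal{S}_k$, with the identifications
\[
s_i^k = x_i^k - z_i^k/\eta, \qquad u_i^k = x_i^k, \qquad \hat{u}_i^k = x_i^k + z_i^k/\eta, \qquad \bar{v}^k = \bar{x}^k .
\]
In particular the consensus variable is preserved exactly, so $\nabla f(\bar v^k)=\nabla f(\bar x^k)$ for every $k$.

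First I would verify that the inexactness budgets also transport through the mapping. In FedADMM, the relation $x_i^{k+1}\approx \arg\min_{x_i}\mathcal{L}_i(x_i,\bar x^k,z_i^k)$ is, by completing the square in \eqref{eqlag}, exactly the statement $x_i^{k+1}\approx \prox_{\hat\eta f_i}(\bar x^k - z_i^k/\eta)$. Combined with $s_i^{k+1}=\bar x^k - z_i^k/\eta$ (which is step (a) of the equivalence proof) and $u_i^{k+1}=x_i^{k+1}$, this gives $\|u_i^{k+1}-\prox_{r f_i}(s_i^{k+1})\|\le \epsilon_{i,k+1}$, i.e.\ the same $\epsilon_{i,k+1}$ appears in the FedDR inexactness. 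Thus the FedDR run admits the identical inexactness sequence. For clients $i\notin\mathcal{S}_k$, both algorithms freeze all variables, so the non-participating branch of FedDR is respected as well.

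Next I would instantiate Theorem 3.1 of \cite{tran2021feddr} on the FedDR sequence with parameter $\hat\eta=1/\eta$. That theorem requires $\hat\eta$ to be sufficiently small so that the numerator $2-(L\hat\eta+1)-2L^2\hat\eta^2-4\gamma_4(1+L^2\hat\eta^2)$ defining $\beta$ is positive; solving this quadratic for $\hat\eta$ and inverting to $\eta=1/\hat\eta$ yields exactly the lower bound
\[
\eta>\frac{4L(1+2\gamma_4)}{\sqrt{9-16\gamma_4(1+4\gamma_4)}-1}
\]
stated in the theorem. Under this condition, Theorem 3.1 of \cite{tran2021feddr} provides the bound
\[
\frac{1}{K+1}\sum_{k=0}^{K}\mathbb{E}\bigl[\|\nabla f(\bar v^k)\|^2\bigr]\le \frac{C_1[F(x^0)-F^\star]}{K+1}+\frac{1}{n(K+1)}\sum_{k=0}^{K}\sum_{i=1}^n\bigl(C_2\epsilon_{i,k}^2+C_3\epsilon_{i,k+1}^2\bigr)
\]
with the very constants $\beta,\rho_1,\rho_2,C_1,C_2,C_3$ displayed in the statement (once $\hat\eta=1/\eta$ is plugged in). Substituting $\bar v^k=\bar x^k$ gives the claim; the reduction to the single update $\bar x^{k+1}=\tilde x^{k+1}$ on the server side when $g\equiv 0$ is already observed immediately after Algorithm~\ref{algADMM}, and it matches the $g\equiv 0$ instance of the FedDR bound.

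The main obstacle, and the only non-formal step, is the careful bookkeeping of the inexactness transport: one must check that the single tolerance $\epsilon_{i,k+1}$ controls \emph{both} the FedDR proximal residual and the FedADMM Lagrangian-minimization residual with the same constant (no extra factor of $\eta$ or $L$), and that the variable $\hat u_i^{k+1}=2u_i^{k+1}-s_i^{k+1}$ inherits at most the same error. This is immediate from the affine relation $\hat u_i^{k+1}=u_i^{k+1}+(u_i^{k+1}-s_i^{k+1})$ together with the dual update $z_i^{k+1}=z_i^k+\eta(x_i^{k+1}-\bar x^k)$, but it is the one place where the substitution $\hat\eta\leftrightarrow 1/\eta$ has to be tracked end-to-end. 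Once this is in hand, no further estimation is required: the theorem is a transparent rewriting of \cite[Thm.~3.1]{tran2021feddr} under the equivalence.
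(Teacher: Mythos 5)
Your proposal is correct and follows essentially the same route as the paper, which proves this theorem simply by invoking the FedDR--FedADMM equivalence and citing Theorem 3.1 of \cite{tran2021feddr} with parameter $\hat\eta = 1/\eta$. Your additional bookkeeping on transporting the inexactness tolerances $\epsilon_{i,k}$ through the change of variables is a welcome level of detail that the paper leaves implicit, but it does not constitute a different approach.
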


\begin{coro}
If the accuracy sequence $\epsilon_{i, k}$ (for all $i \in [n]$ and $k>0$) at Step 8 in Alg~\ref{algADMM} satisfies $\frac{1}{n} \sum_{i=1}^{n} \sum_{k=0}^{K+1} \epsilon_{i, k}^{2} \leq D$ for a given constant $D>0$ and all $K \geq 0$. Then, FedADMM needs
$$
K=\left\lfloor\frac{C_{1}\left[F\left(x^{0}\right)-F^{\star}\right]+\left(C_{2}+C_{3}\right) D}{\varepsilon^{2}}\right\rfloor \equiv \mathcal{O}\left(\varepsilon^{-2}\right)
$$
iterations to achieve $\frac{1}{K+1} \sum_{k=0}^{K} \mathbb{E}\left[\left\|\nabla f\left(\tilde{x}^{k}\right)\right\|^{2}\right] \le \varepsilon^2,$ where $\tilde{x}^{K}$ is randomly selected from $\{\bar{x}^{0}, \bar{x}^{1},\cdots, \bar{x}^{K}\}$. In other words, after $K=\mathcal{O}(\varepsilon^{-2})$ iterations, $
\tilde{x}^{K}$ is an $\varepsilon$-stationary solution of problem~\eqref{eqobjective} when $g\equiv 0$.
\end{coro}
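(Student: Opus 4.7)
The plan is to derive Corollary 1 as an almost-immediate consequence of Theorem 2. The statement essentially converts the $\mathcal{O}(1/(K{+}1))$ convergence bound of Theorem 2 into an iteration-complexity statement, so the proof is largely a matter of bounding the accuracy-driven error term and inverting the resulting inequality to solve for $K$.

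First, I would start from the bound supplied by Theorem 2, namely
\begin{equation*}
\frac{1}{K+1}\sum_{k=0}^{K} \mathbb{E}\!\left[\left\|\nabla f\!\left(\bar{x}^{k}\right)\right\|^{2}\right] \le \frac{C_{1}[F(x^{0})-F^{\star}]}{K+1} + \frac{1}{n(K+1)}\sum_{k=0}^{K}\sum_{i=1}^{n}\!\left(C_{2}\epsilon_{i,k}^{2} + C_{3}\epsilon_{i,k+1}^{2}\right),
\end{equation*}
and upper-bound the second term using the hypothesis $\tfrac{1}{n}\sum_{i=1}^{n}\sum_{k=0}^{K+1}\epsilon_{i,k}^{2}\le D$. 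Since both the $C_{2}$ and $C_{3}$ sums range over a subset of the indices $\{(i,k):i\in[n],\,0\le k\le K+1\}$, each can individually be bounded by $D$, yielding the combined estimate $\tfrac{(C_{2}+C_{3})D}{K+1}$.

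Next, I would enforce that the right-hand side is at most $\varepsilon^{2}$:
\begin{equation*}
\frac{C_{1}[F(x^{0})-F^{\star}] + (C_{2}+C_{3})D}{K+1} \le \varepsilon^{2},
\end{equation*}
and solve for $K$. A straightforward rearrangement gives the stated threshold $K = \left\lfloor \tfrac{C_{1}[F(x^{0})-F^{\star}] + (C_{2}+C_{3})D}{\varepsilon^{2}} \right\rfloor$, which is clearly $\mathcal{O}(\varepsilon^{-2})$.

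Finally, I would translate the averaged bound into the random-index statement. Because $\tilde{x}^{K}$ is drawn uniformly from $\{\bar{x}^{0},\ldots,\bar{x}^{K}\}$ independently of the algorithm's internal randomness, the tower property gives
\begin{equation*}
\mathbb{E}\!\left[\left\|\nabla f\!\left(\tilde{x}^{K}\right)\right\|^{2}\right] = \frac{1}{K+1}\sum_{k=0}^{K}\mathbb{E}\!\left[\left\|\nabla f\!\left(\bar{x}^{k}\right)\right\|^{2}\right] \le \varepsilon^{2},
\end{equation*}
so $\tilde{x}^{K}$ is an $\varepsilon$-stationary solution in the sense of Definition 3. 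Honestly, there is no serious obstacle here: the only subtlety is being careful that the accuracy budget $D$ is the uniform bound that covers both the $\epsilon_{i,k}$ and the shifted $\epsilon_{i,k+1}$ sums appearing in Theorem 2, which the hypothesis handles because it sums up to index $K+1$. The rest is bookkeeping.
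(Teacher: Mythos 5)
Your proposal is correct and follows exactly the route the paper intends: the corollary is stated as an immediate consequence of Theorem 2, obtained by bounding both the $\epsilon_{i,k}$ and shifted $\epsilon_{i,k+1}$ sums by the budget $D$ (which covers indices up to $K+1$), solving $\bigl(C_{1}[F(x^{0})-F^{\star}]+(C_{2}+C_{3})D\bigr)/(K+1)\le\varepsilon^{2}$ for $K$, and using uniform random selection of $\tilde{x}^{K}$ to convert the averaged bound into the $\varepsilon$-stationarity statement. No gaps.
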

\begin{remark}
Our convergence analysis can be easily extended to $g\not\equiv 0$, as long as we change the suboptimal condition into the gradient mapping as in~\cite{tran2021feddr}. To make $\frac{1}{n} \sum_{i=1}^{n} \sum_{k=0}^{K+1} \epsilon_{i, k}^{2} \leq D$ hold, interested readers could refer to Remark 3.1 in~\cite{tran2021feddr}.
\end{remark}
\begin{remark}
Although FedADMM is a partial participation version of FedPD when $g\equiv 0$, its communication complexity is still $\mathcal{O}(\varepsilon^{-2})$, which matches the lower bound (up to constant factors) in~\cite{zhang2021fedpd}.
\end{remark}

\section{Numerical Simulations}
To demonstrate the equivalence of FedDR and FedADMM, we conduct diverse simulations on both synthetic and real datasets. It is worthwhile to mention that our goal is to show the equivalence of the algorithms, \emph{not} to compare their performance with other algorithms. Performance profiling of FedPD and FedDR can be found in~\cite{tran2021feddr,zhang2021fedpd}. We have not attempted to optimize any hyperparameters. All the experiments run on Google Colab with default CPU setup.

\noindent\textbf{Datasets:} We first generate synthetic non-iid datasets by following the same setup as in~\cite{shamir2014communication} and denote them as \texttt{synthetic-$(\alpha, \beta)$}. Here $\alpha$ controls how much local models differ from each other and $\beta$ controls how much the local data at each device differs from that of other devices. We run the experiments by using the unbalanced datasets: \texttt{synthetic-(0, 0)}, \texttt{synthetic-(0.5, 0.5)} and \texttt{synthetic-(1, 1)}. We then compare FedADMM with FedDR on the FEMNIST data set~\cite{caldas2018leaf}. FEMNIST is a more complex 62-class Federated Extended MNIST dataset. It consists of handwritten characters including: numbers 1-10, 26 upper-and lower-case letters A-Z and a-z from different writers and is also separated by the writers, therefore the
dataset is non-iid. 

\noindent\textbf{Models and Hyper-parameters:} For all the synthetic datasets, we use the  model described in~\cite{tran2021feddr}: a neutral network with a single hidden layer. The network architecture is  $60\times32\times 10$ corresponding to \emph{input layer}$ \times$ \emph{hidden layer} $\times$ \emph{output layer} size. For FEMNIST data, we use the same model as~\cite{caldas2018leaf}, which consists of 2 convolutional layers and two fully
connected layers, with 62 neurons in the output layer matching
the number of classes in the FEMNIST dataset. For all the experiments, we use $\eta=1$ and $\alpha=1$. As in~\cite{zhang2021fedpd}, we choose stochastic gradient descent as a local solver with 300 local iterations to solve the step 11 in FedDR and the step 9 in FedADMM. The mini-batch size in calculating the stochastic gradient is 2 and the learning rate is 0.01. We stress that we do not attempt to optimize these parameters.

\noindent\textbf{Implementation:} We use the uniform sampling scheme to select the clients in each round. The total number of clients is 30 and we set the number of active clients in each round as 10. To provide a fair comparison, we use the same random seeds across all algorithms.

After running multiple experiments on different datasets and models, from figure~\ref{fig:femnist} and \ref{fig:synthetic}, we could observe that the training accuracy and loss of FedDR and FedADMM coincide at each iteration, which verifies our theoretical analysis in section~\ref{thm}. 

\begin{figure}[t!]
     \centering
     \begin{subfigure}[t]{0.48\textwidth}
         \centering
         \includegraphics[width=\textwidth]{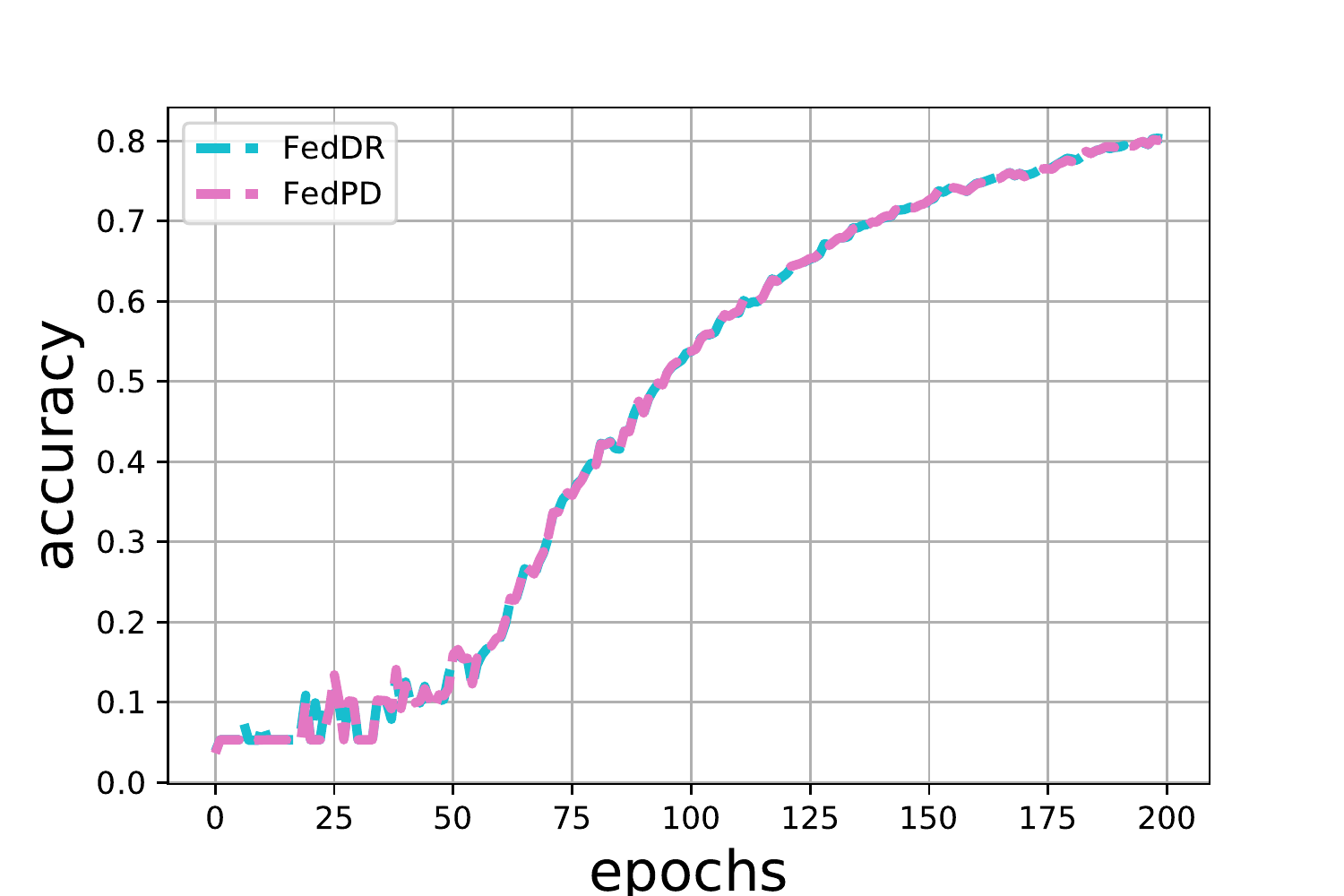}
         \caption{ FEMNIST accuracy }
         \label{fig:syn01}
     \end{subfigure}
     \hfil 
     \begin{subfigure}[t]{0.48\textwidth}
         \centering
         \includegraphics[width=\textwidth]{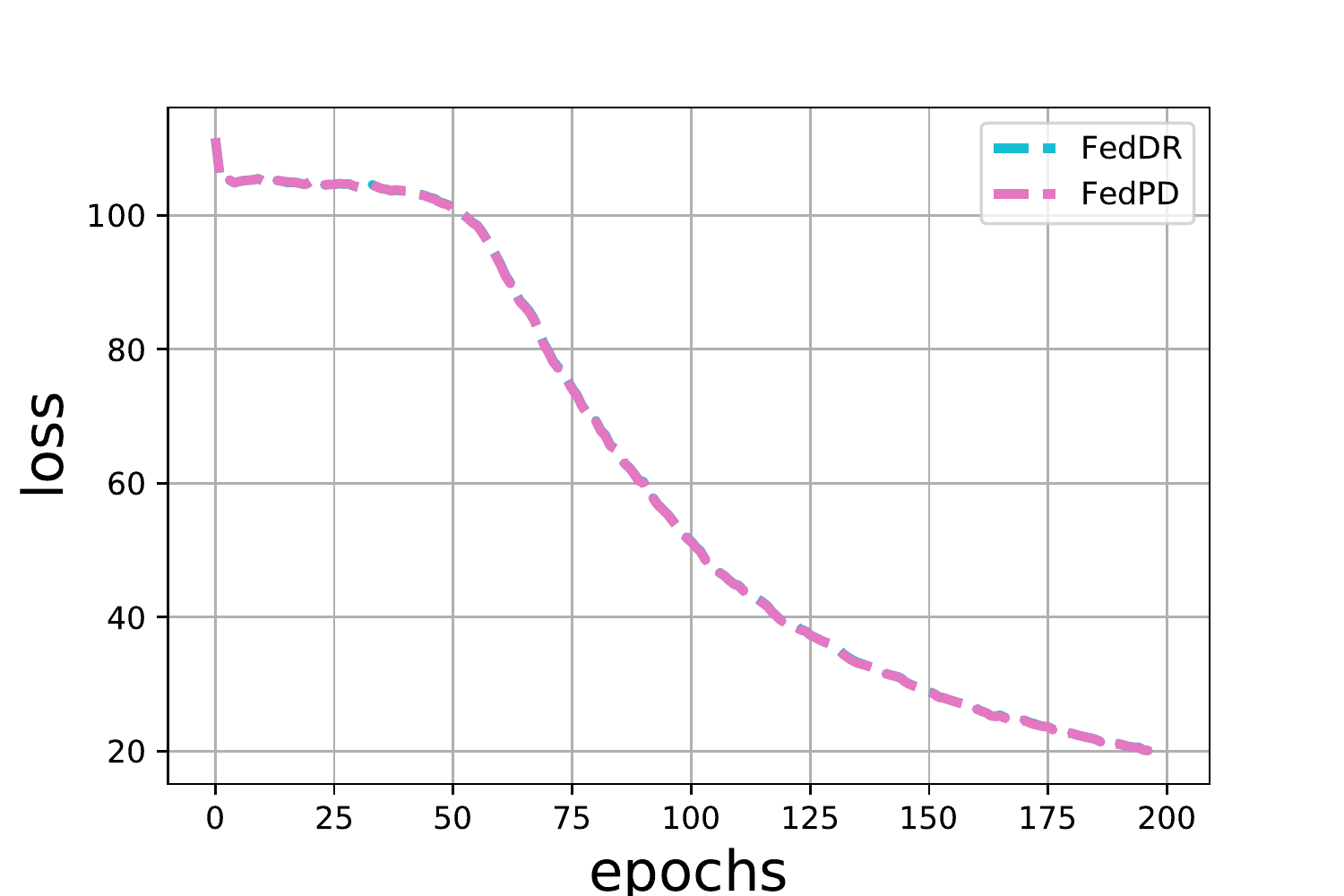}
          \caption{FEMNIST loss }
        \label{fig:syn02}  
     \end{subfigure}
     \caption{Identical performance of FedDR and FedADMM in terms of training accuracy and cross-entropy training loss of FEMNIST dataset}
     \label{fig:femnist}
\end{figure}



\begin{figure*}[htb] 
\centering 
\begin{subfigure}{0.3\textwidth}
  \includegraphics[width=\linewidth]{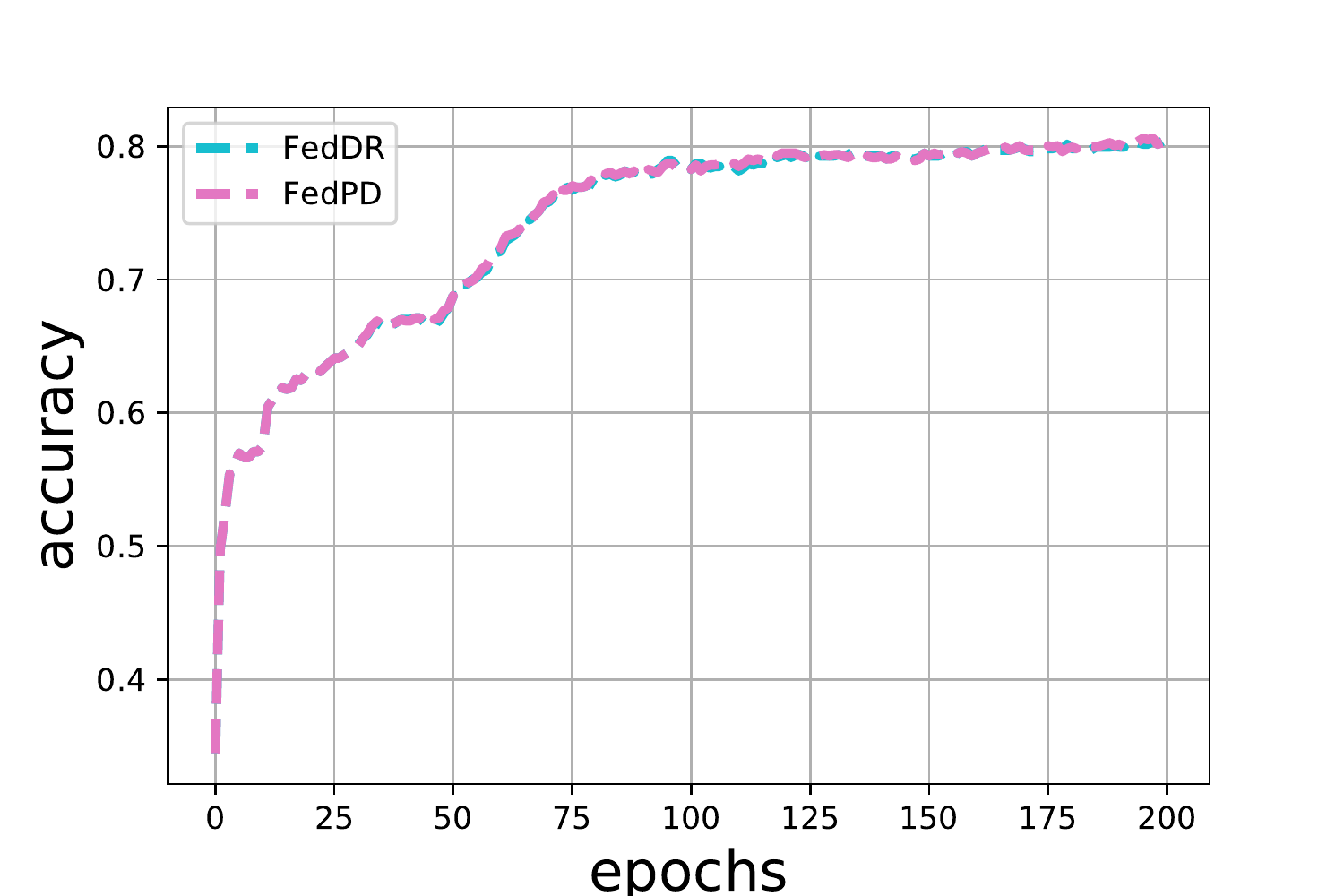}
   \caption{\texttt{synthetic(0,0)}  accuracy}
  \label{fig:1}
\end{subfigure}\hfil 
\begin{subfigure}{0.3\textwidth}
  \includegraphics[width= \linewidth]{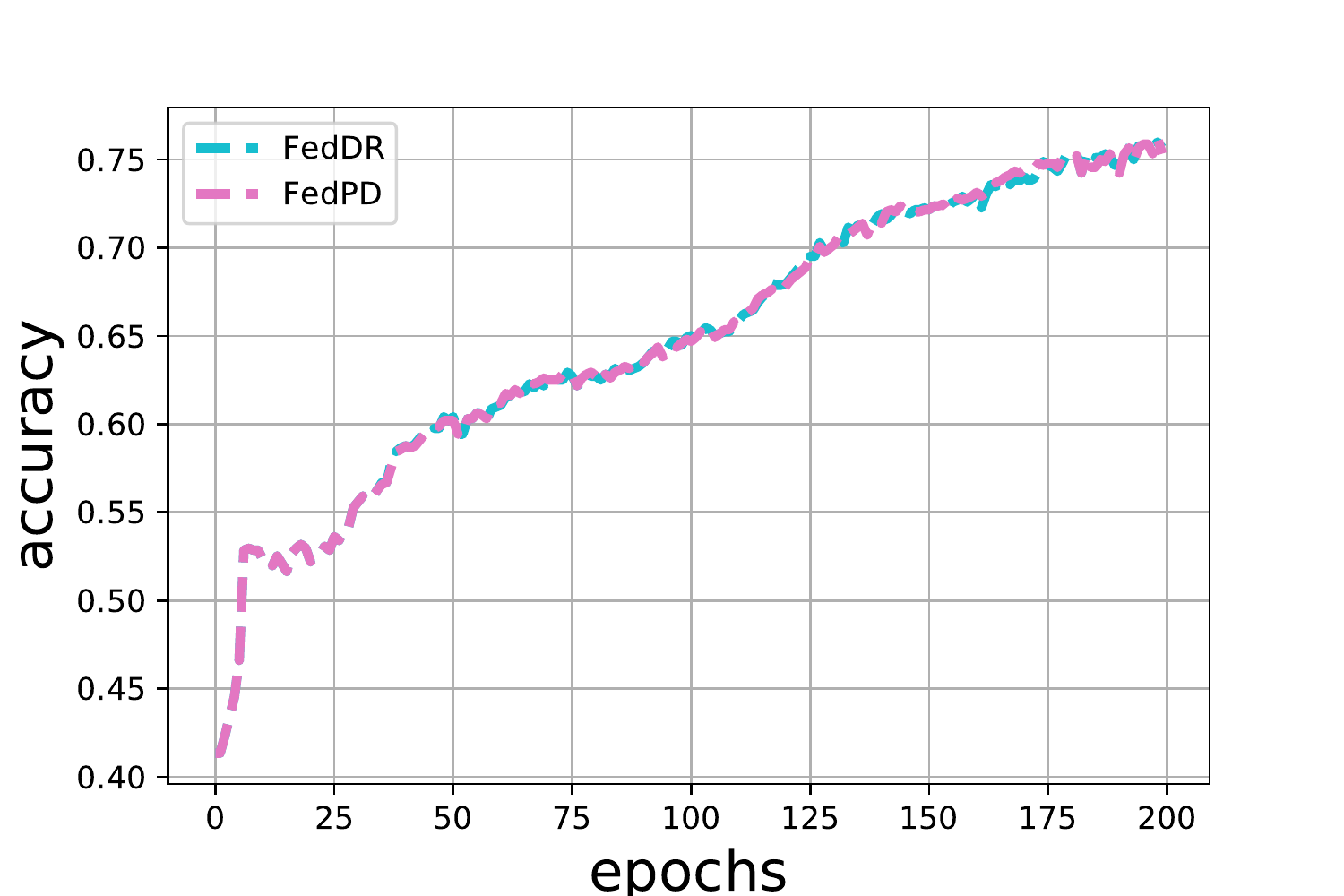}
 \caption{\texttt{synthetic(0.5, 0.5)} accuracy}
  \label{fig:2}
\end{subfigure}\hfil 
\begin{subfigure}{0.3\textwidth}
  \includegraphics[width=\linewidth]{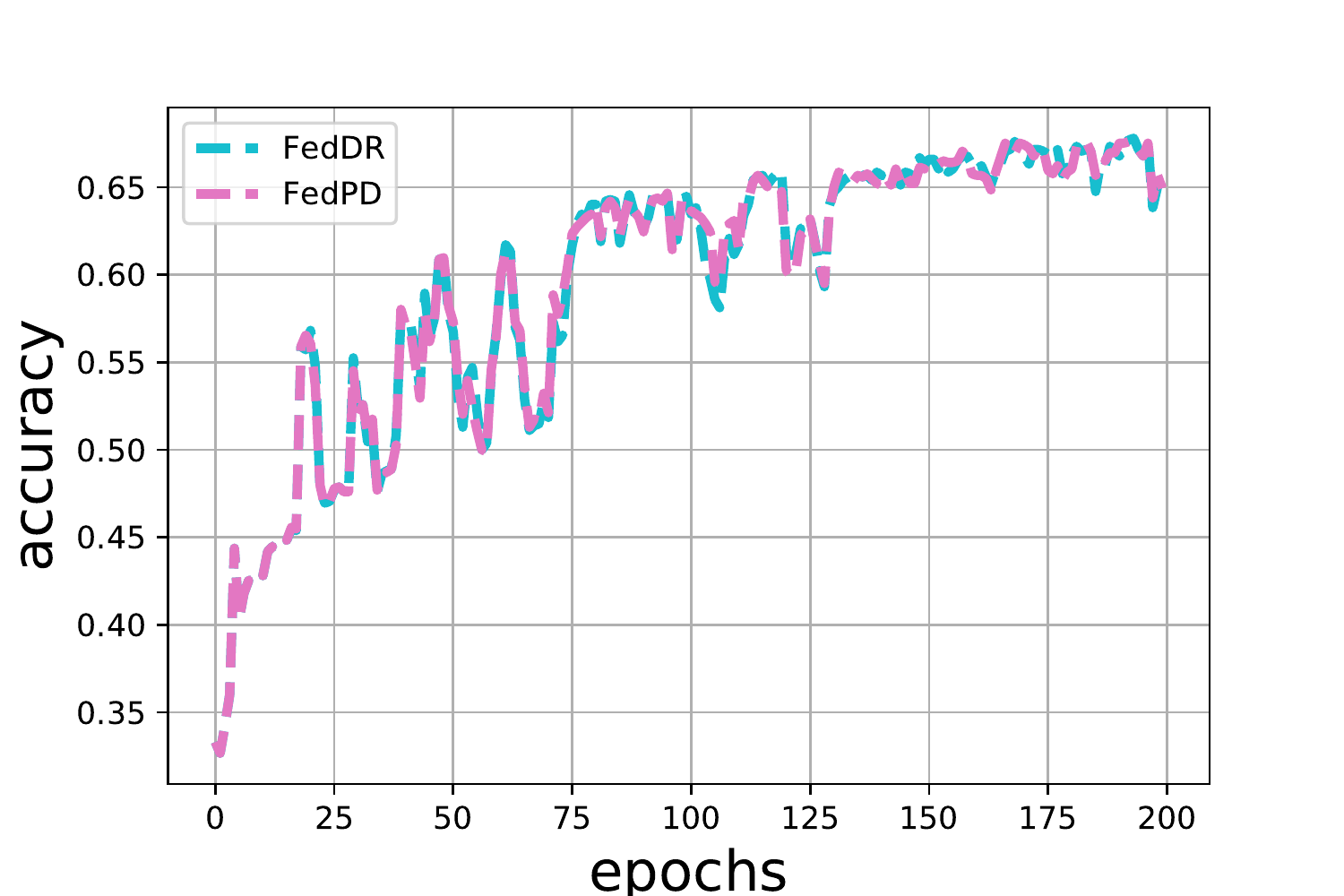}
 \caption{\texttt{synthetic(1,1)} accuracy}
  \label{fig:3} 
\end{subfigure}

\medskip
\begin{subfigure}{0.3\textwidth}
  \includegraphics[width=\linewidth]{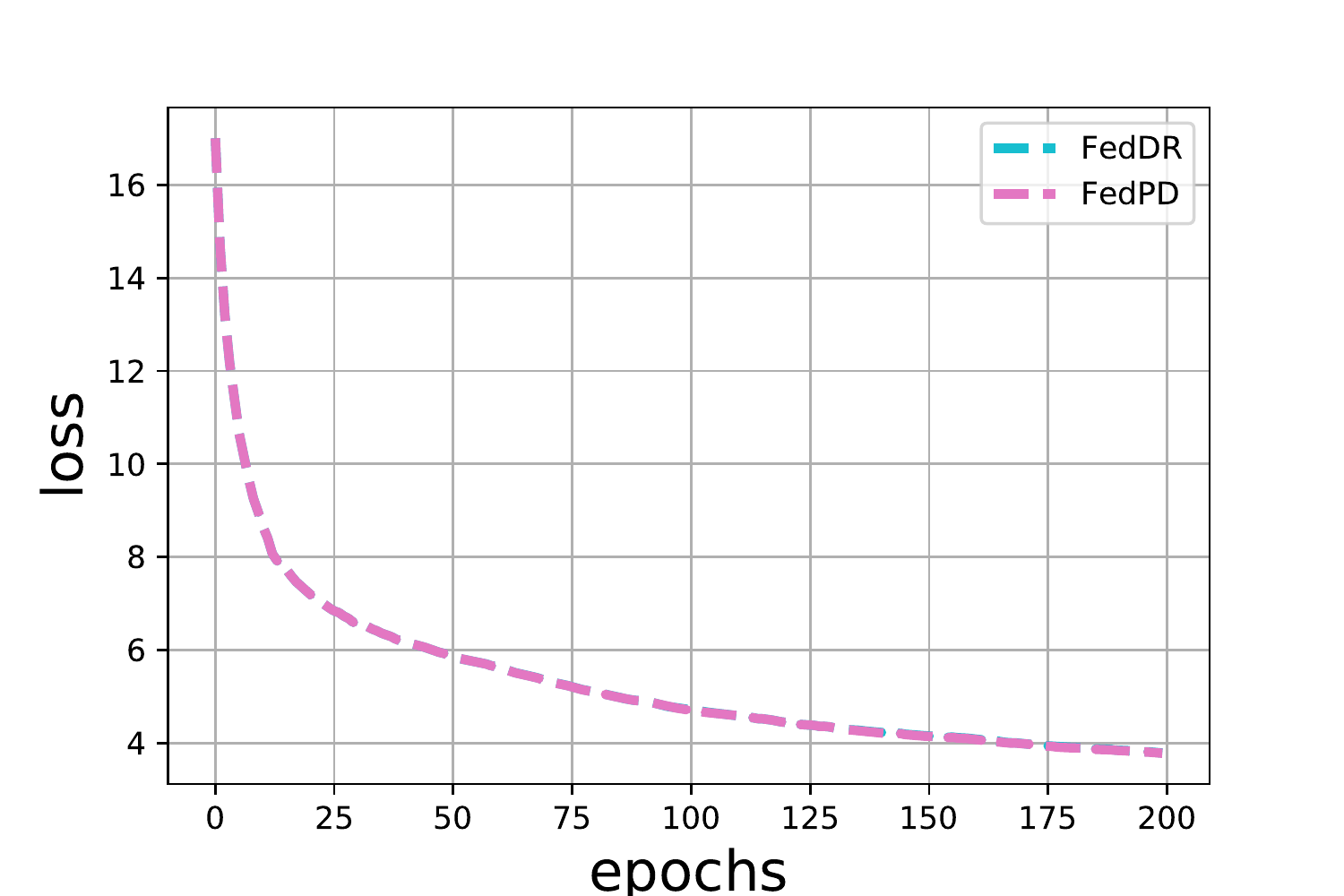}
 \caption{\texttt{synthetic(0,0)}  loss}
  \label{fig:4}
\end{subfigure}\hfil 
\begin{subfigure}{0.3\textwidth}
  \includegraphics[width=\linewidth]{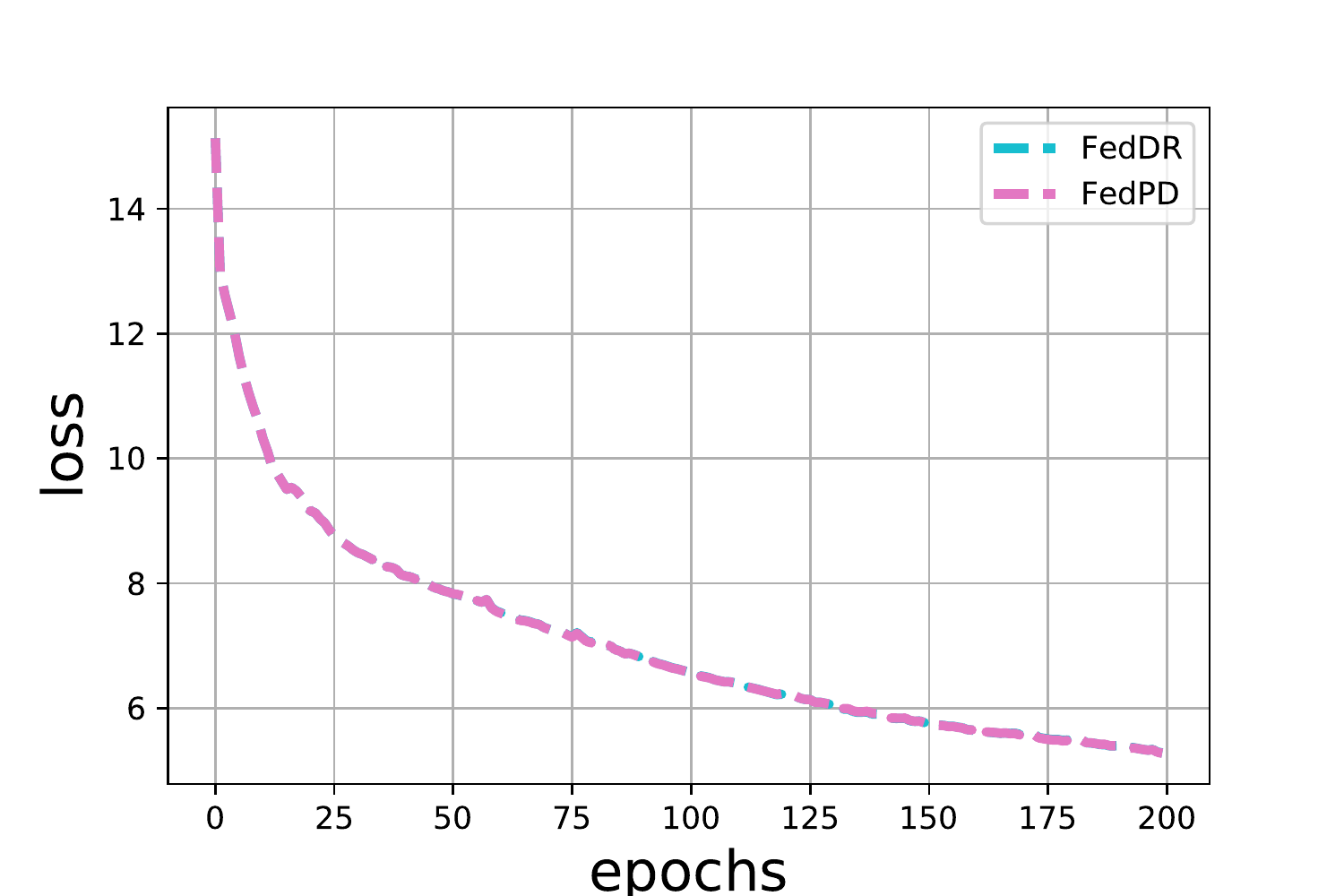}
 \caption{\texttt{synthetic(0.5,0.5)}  loss}
  \label{fig:5}
\end{subfigure}\hfil 
\begin{subfigure}{0.3\textwidth}
  \includegraphics[width=\linewidth]{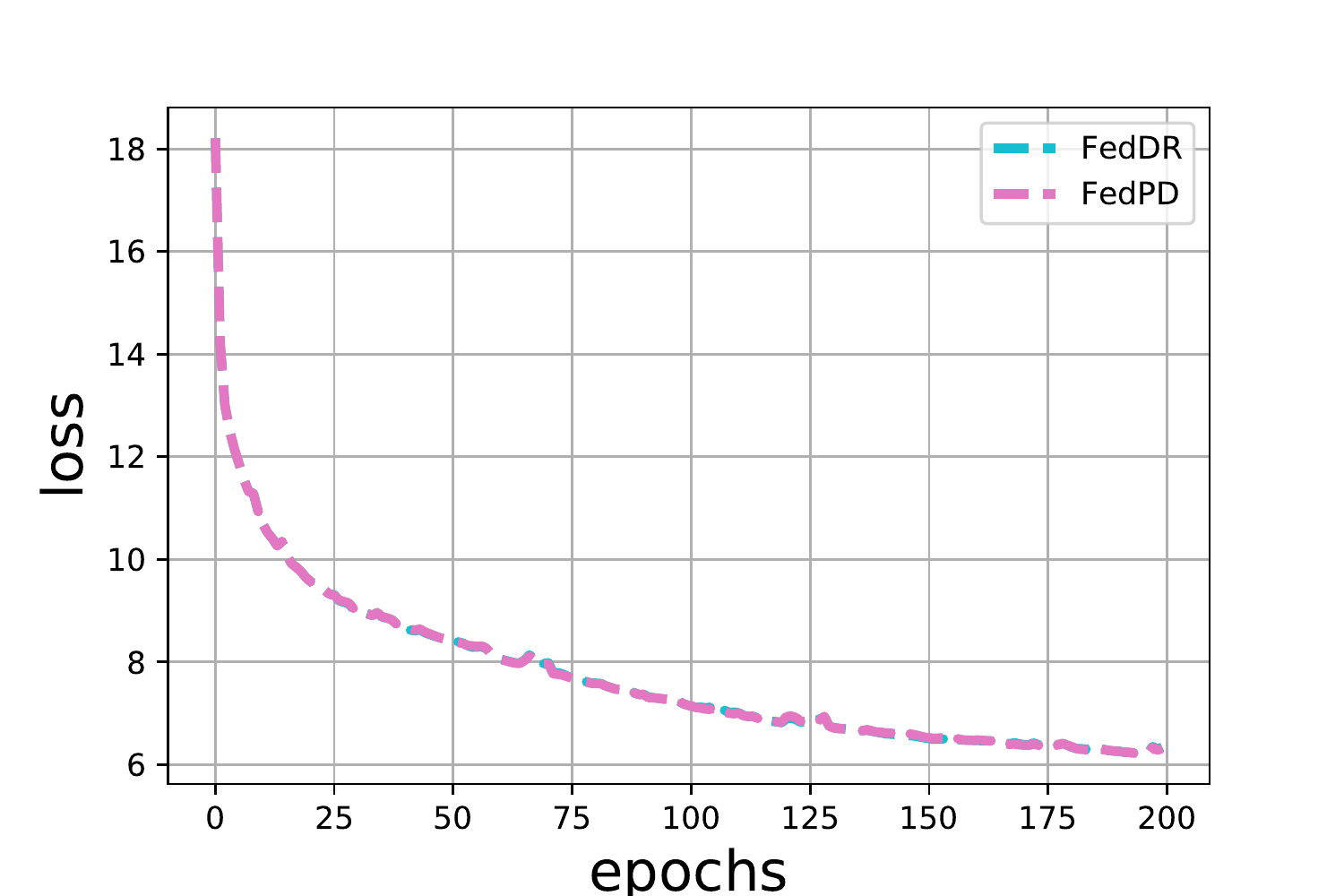}
   \caption{\texttt{synthetic(1,1)}  loss} 
  \label{fig:6} 
\end{subfigure}
\caption{Identical performance of FedDR and FedADMM in terms of training accuracy and cross-entropy training loss of synthetic datasets}
\label{fig:synthetic}
\end{figure*}

\section{Conclusion}
We have developed a new federated learning algorithm, FedADMM, for finding stationary points in non-convex composite optimization problems. Current work is focused on incorporating convex constraints into the algorithm, proposing an asynchronous algorithm, asyncFedADMM, and applying it to non-localizable model predictive control problems where communication efficiency is necessary~\cite{AloSAM21}.
\bibliographystyle{IEEEtranS}
\bibliography{reference}

\begin{thebibliography}{10}
\providecommand{\url}[1]{#1}
\csname url@samestyle\endcsname
\providecommand{\newblock}{\relax}
\providecommand{\bibinfo}[2]{#2}
\providecommand{\BIBentrySTDinterwordspacing}{\spaceskip=0pt\relax}
\providecommand{\BIBentryALTinterwordstretchfactor}{4}
\providecommand{\BIBentryALTinterwordspacing}{\spaceskip=\fontdimen2\font plus
\BIBentryALTinterwordstretchfactor\fontdimen3\font minus
  \fontdimen4\font\relax}
\providecommand{\BIBforeignlanguage}[2]{{%
\expandafter\ifx\csname l@#1\endcsname\relax
\typeout{** WARNING: IEEEtranS.bst: No hyphenation pattern has been}%
\typeout{** loaded for the language `#1'. Using the pattern for}%
\typeout{** the default language instead.}%
\else
\language=\csname l@#1\endcsname
\fi
#2}}
\providecommand{\BIBdecl}{\relax}
\BIBdecl

\bibitem{acar2021federated}
D.~A.~E. Acar, Y.~Zhao, R.~M. Navarro, M.~Mattina, P.~N. Whatmough, and
  V.~Saligrama, ``Federated learning based on dynamic regularization,''
  \emph{arXiv preprint arXiv:2111.04263}, 2021.

\bibitem{AloSAM21}
C.~A. Alonso, J.~Shuang, J.~Anderson, and N.~Matni, ``Distributed and localized
  model predictive control. part i: Synthesis and implementation,'' \emph{arXiv
  preprint arXiv:2110.07010}, 2021.

\bibitem{boyd2011distributed}
S.~Boyd, N.~Parikh, E.~Chu, B.~Peleato, J.~Eckstein \emph{et~al.},
  ``Distributed optimization and statistical learning via the alternating
  direction method of multipliers,'' \emph{Foundations and
  Trends{\textregistered} in Machine learning}, vol.~3, no.~1, pp. 1--122,
  2011.

\bibitem{caldas2018leaf}
S.~Caldas, S.~M.~K. Duddu, P.~Wu, T.~Li, J.~Kone{\v{c}}n{\`y}, H.~B. McMahan,
  V.~Smith, and A.~Talwalkar, ``{Leaf: A benchmark for federated settings},''
  \emph{arXiv preprint arXiv:1812.01097}, 2018.

\bibitem{douglas1956numerical}
J.~Douglas and H.~H. Rachford, ``On the numerical solution of heat conduction
  problems in two and three space variables,'' \emph{Transactions of the
  American mathematical Society}, vol.~82, no.~2, pp. 421--439, 1956.

\bibitem{eckstein1989splitting}
J.~Eckstein, ``Splitting methods for monotone operators with applications to
  parallel optimization,'' Ph.D. dissertation, Massachusetts Institute of
  Technology, 1989.

\bibitem{fukushima1992application}
M.~Fukushima, ``Application of the alternating direction method of multipliers
  to separable convex programming problems,'' \emph{Computational Optimization
  and Applications}, vol.~1, no.~1, pp. 93--111, 1992.

\bibitem{gabay1983chapter}
D.~Gabay, ``{Chapter ix applications of the method of multipliers to
  variational inequalities},'' in \emph{Studies in mathematics and its
  applications}.\hskip 1em plus 0.5em minus 0.4em\relax Elsevier, 1983,
  vol.~15, pp. 299--331.

\bibitem{gabay1976dual}
D.~Gabay and B.~Mercier, ``A dual algorithm for the solution of nonlinear
  variational problems via finite element approximation,'' \emph{Computers \&
  mathematics with applications}, vol.~2, no.~1, pp. 17--40, 1976.

\bibitem{giselsson2016linear}
P.~Giselsson and S.~Boyd, ``{Linear convergence and metric selection for
  Douglas-Rachford splitting and ADMM},'' \emph{IEEE Transactions on Automatic
  Control}, vol.~62, no.~2, pp. 532--544, 2016.

\bibitem{kairouz2021advances}
P.~Kairouz, H.~B. McMahan, B.~Avent, A.~Bellet, M.~Bennis, A.~N. Bhagoji,
  K.~Bonawitz, Z.~Charles, G.~Cormode, R.~Cummings \emph{et~al.}, ``Advances
  and open problems in federated learning,'' \emph{Foundations and
  Trends{\textregistered} in Machine Learning}, vol.~14, no. 1--2, pp. 1--210,
  2021.

\bibitem{karimireddy2020scaffold}
S.~P. Karimireddy, S.~Kale, M.~Mohri, S.~Reddi, S.~Stich, and A.~T. Suresh,
  ``{Scaffold: Stochastic controlled averaging for federated learning},'' in
  \emph{International Conference on Machine Learning}.\hskip 1em plus 0.5em
  minus 0.4em\relax PMLR, 2020, pp. 5132--5143.

\bibitem{khaled2019first}
A.~Khaled, K.~Mishchenko, and P.~Richt{\'a}rik, ``{First analysis of local gd
  on heterogeneous data},'' \emph{arXiv preprint arXiv:1909.04715}, 2019.

\bibitem{konevcny2016federated}
J.~Kone{\v{c}}n{\`y}, H.~B. McMahan, F.~X. Yu, P.~Richt{\'a}rik, A.~T. Suresh,
  and D.~Bacon, ``Federated learning: Strategies for improving communication
  efficiency,'' \emph{arXiv preprint arXiv:1610.05492}, 2016.

\bibitem{li2015global}
G.~Li and T.~K. Pong, ``Global convergence of splitting methods for nonconvex
  composite optimization,'' \emph{SIAM Journal on Optimization}, vol.~25,
  no.~4, pp. 2434--2460, 2015.

\bibitem{li2016douglas}
------, ``{Douglas--Rachford splitting for nonconvex optimization with
  application to nonconvex feasibility problems},'' \emph{Mathematical
  programming}, vol. 159, no.~1, pp. 371--401, 2016.

\bibitem{li2019convergence}
X.~Li, K.~Huang, W.~Yang, S.~Wang, and Z.~Zhang, ``On the convergence of fedavg
  on non-iid data,'' \emph{arXiv preprint arXiv:1907.02189}, 2019.

\bibitem{li2019convergence[b]}
X.~Li and F.~Orabona, ``On the convergence of stochastic gradient descent with
  adaptive stepsizes,'' in \emph{The 22nd International Conference on
  Artificial Intelligence and Statistics}.\hskip 1em plus 0.5em minus
  0.4em\relax PMLR, 2019, pp. 983--992.

\bibitem{lions1979splitting}
P.-L. Lions and B.~Mercier, ``Splitting algorithms for the sum of two nonlinear
  operators,'' \emph{SIAM Journal on Numerical Analysis}, vol.~16, no.~6, pp.
  964--979, 1979.

\bibitem{mcmahan2017communication}
B.~McMahan, E.~Moore, D.~Ramage, S.~Hampson, and B.~A. y~Arcas,
  ``Communication-efficient learning of deep networks from decentralized
  data,'' in \emph{Artificial intelligence and statistics}.\hskip 1em plus
  0.5em minus 0.4em\relax PMLR, 2017, pp. 1273--1282.

\bibitem{mitra2021linear}
A.~Mitra, R.~Jaafar, G.~Pappas, and H.~Hassani, ``{Linear Convergence in
  Federated Learning: Tackling Client Heterogeneity and Sparse Gradients},''
  \emph{Advances in Neural Information Processing Systems}, vol.~34, 2021.

\bibitem{mohri2019agnostic}
M.~Mohri, G.~Sivek, and A.~T. Suresh, ``Agnostic federated learning,'' in
  \emph{International Conference on Machine Learning}.\hskip 1em plus 0.5em
  minus 0.4em\relax PMLR, 2019, pp. 4615--4625.

\bibitem{ParB14}
N.~Parikh and S.~Boyd, ``Proximal algorithms,'' \emph{Foundations and Trends in
  optimization}, vol.~1, no.~3, pp. 127--239, 2014.

\bibitem{pathak2020fedsplit}
R.~Pathak and M.~J. Wainwright, ``{FedSplit: An algorithmic framework for fast
  federated optimization},'' \emph{Advances in Neural Information Processing
  Systems}, vol.~33, pp. 7057--7066, 2020.

\bibitem{peaceman1955numerical}
D.~W. Peaceman and H.~H. Rachford, Jr, ``{The numerical solution of parabolic
  and elliptic differential equations},'' \emph{Journal of the Society for
  industrial and Applied Mathematics}, vol.~3, no.~1, pp. 28--41, 1955.

\bibitem{reisizadeh2020fedpaq}
A.~Reisizadeh, A.~Mokhtari, H.~Hassani, A.~Jadbabaie, and R.~Pedarsani,
  ``{FedPAQ: A communication-efficient federated learning method with periodic
  averaging and quantization},'' in \emph{International Conference on
  Artificial Intelligence and Statistics}.\hskip 1em plus 0.5em minus
  0.4em\relax PMLR, 2020, pp. 2021--2031.

\bibitem{richtarik2016parallel}
P.~Richt{\'a}rik and M.~Tak{\'a}{\v{c}}, ``Parallel coordinate descent methods
  for big data optimization,'' \emph{Mathematical Programming}, vol. 156,
  no.~1, pp. 433--484, 2016.

\bibitem{sahu2018convergence}
A.~K. Sahu, T.~Li, M.~Sanjabi, M.~Zaheer, A.~Talwalkar, and V.~Smith, ``On the
  convergence of federated optimization in heterogeneous networks,''
  \emph{arXiv preprint arXiv:1812.06127}, vol.~3, p.~3, 2018.

\bibitem{seidman2019control}
J.~H. Seidman, M.~Fazlyab, V.~M. Preciado, and G.~J. Pappas, ``{A
  control-theoretic approach to analysis and parameter selection of
  Douglas--Rachford splitting},'' \emph{IEEE Control Systems Letters}, vol.~4,
  no.~1, pp. 199--204, 2019.

\bibitem{shamir2014communication}
O.~Shamir, N.~Srebro, and T.~Zhang, ``Communication-efficient distributed
  optimization using an approximate newton-type method,'' in
  \emph{International conference on machine learning}.\hskip 1em plus 0.5em
  minus 0.4em\relax PMLR, 2014, pp. 1000--1008.

\bibitem{stich2018local}
S.~U. Stich, ``{Local SGD converges fast and communicates little},''
  \emph{arXiv preprint arXiv:1805.09767}, 2018.

\bibitem{themelis2020douglas}
A.~Themelis and P.~Patrinos, ``{Douglas--Rachford splitting and ADMM for
  nonconvex optimization: Tight convergence results},'' \emph{SIAM Journal on
  Optimization}, vol.~30, no.~1, pp. 149--181, 2020.

\bibitem{tran2021feddr}
Q.~Tran~Dinh, N.~Pham, D.~Phan, and L.~Nguyen, ``{FedDR--randomized
  Douglas-Rachford splitting algorithms for nonconvex federated composite
  optimization},'' \emph{Advances in Neural Information Processing Systems},
  vol.~34, 2021.

\bibitem{wang2018cooperative}
J.~Wang and G.~Joshi, ``{Cooperative SGD: A unified framework for the design
  and analysis of communication-efficient SGD algorithms},'' \emph{arXiv
  preprint arXiv:1808.07576}, 2018.

\bibitem{wang2019adaptive}
S.~Wang, T.~Tuor, T.~Salonidis, K.~K. Leung, C.~Makaya, T.~He, and K.~Chan,
  ``Adaptive federated learning in resource constrained edge computing
  systems,'' \emph{IEEE Journal on Selected Areas in Communications}, vol.~37,
  no.~6, pp. 1205--1221, 2019.

\bibitem{yan2016self}
M.~Yan and W.~Yin, ``Self equivalence of the alternating direction method of
  multipliers,'' in \emph{Splitting Methods in Communication, Imaging, Science,
  and Engineering}.\hskip 1em plus 0.5em minus 0.4em\relax Springer, 2016, pp.
  165--194.

\bibitem{yuan2021federated}
H.~Yuan, M.~Zaheer, and S.~Reddi, ``Federated composite optimization,'' in
  \emph{International Conference on Machine Learning}.\hskip 1em plus 0.5em
  minus 0.4em\relax PMLR, 2021, pp. 12\,253--12\,266.

\bibitem{zhang2021connection}
X.~Zhang and M.~Hong, ``{On the Connection Between FedDyn and FedPD},'' 2021.

\bibitem{zhang2021fedpd}
X.~Zhang, M.~Hong, S.~Dhople, W.~Yin, and Y.~Liu, ``{FedPD: A federated
  learning framework with adaptivity to non-iid data},'' \emph{IEEE
  Transactions on Signal Processing}, vol.~69, pp. 6055--6070, 2021.

\bibitem{zhao2021automatic}
S.~Zhao, L.~Lessard, and M.~Udell, ``{An automatic system to detect equivalence
  between iterative algorithms},'' \emph{arXiv preprint arXiv:2105.04684},
  2021.

\bibitem{ZhaLU21}
------, ``An automatic system to detect equivalence between iterative
  algorithms,'' \emph{arXiv preprint arXiv:2105.04684}, 2021.

\bibitem{zhao2018federated}
Y.~Zhao, M.~Li, L.~Lai, N.~Suda, D.~Civin, and V.~Chandra, ``{Federated
  learning with non-iid data},'' \emph{arXiv preprint arXiv:1806.00582}, 2018.

\end{thebibliography}

\end{document}